\documentclass{article}
\usepackage{iclr2021_conference,times}
\iclrfinalcopy

% if you need to pass options to natbib, use, e.g.:
%     \PassOptionsToPackage{numbers, compress}{natbib}
% before loading neurips_2020
% \PassOptionsToPackage{numbers}{natbib}

% ready for submission
% \PassOptionsToPackage{numbers}{natbib}
% \usepackage{neurips_2020}

% to compile a preprint version, e.g., for submission to arXiv, add add the
% [preprint] option:
%     \usepackage[preprint]{neurips_2020}

% to compile a camera-ready version, add the [final] option, e.g.:
%     \usepackage[final]{neurips_2020}

% to avoid loading the natbib package, add option nonatbib:
%     \usepackage[nonatbib]{neurips_2020}

\usepackage[utf8]{inputenc} % allow utf-8 input
\usepackage[T1]{fontenc}    % use 8-bit T1 fonts
\usepackage{url}            % simple URL typesetting
\usepackage{booktabs}       % professional-quality tables
\usepackage{amsfonts}       % blackboard math symbols
\usepackage{nicefrac}       % compact symbols for 1/2, etc.
\usepackage{microtype}      % microtypography

\usepackage{amsmath,amsthm,amssymb}
\usepackage{bm}
\usepackage{color}
\usepackage{subfigure}
\usepackage{dsfont}
\usepackage{blkarray, bigstrut}
\usepackage{graphicx}
\usepackage{makecell}
\usepackage{wrapfig}
\usepackage{enumitem}
\usepackage{algorithm}
\usepackage{algpseudocode}
\usepackage{multicol}
\usepackage{multirow}
\usepackage{makecell}
\usepackage{comment}
\usepackage[bottom]{footmisc}

\usepackage{footnote}
\makesavenoteenv{tabular}
\makesavenoteenv{table}

% Must be included before hyperref.
\usepackage{thmtools}
\usepackage{thm-restate}

\usepackage{xcolor}
\definecolor{midgreen}{rgb}{0.1,0.5,0.1}
\definecolor{darkgray}{gray}{0.25}
\definecolor{lightblue}{rgb}{0.25,0.25,1}
\usepackage[colorlinks,linkcolor=midgreen,citecolor=darkgray,urlcolor=lightblue]{hyperref}
\usepackage[capitalise]{cleverref}
\creflabelformat{equation}{#2#1#3}

% More space-efficient enumerated lists
% Jenny: Commented these out because I think they make the lists kind of hard to read / ugly.  We can uncomment if we're really desperate for space, but hopefully we can find space other ways.
\setlist[enumerate]{itemsep=0.5pt, wide=\parindent}
\setlist[itemize]{topsep=0pt,partopsep=0pt}

% Allow symbols for footnotes

\newtheorem{theorem}{Theorem}

\theoremstyle{definition}

\newcommand{\DS}{\displaystyle}

% operators

\newcommand{\abs}[1]{\left |#1\right|}

\def\argmax{\mathop{\rm argmax}}

\def\tr{\mathop{\sf tr}}

\def\PM{P_0^+}

\def\PP{\mathbb P}
\def\GS{[\![M]\!]}

\def\0{{\bm 0}}

\def\b{{\bm b}}

\def\p{{\bm p}}
\def\q{{\bm q}}

\def\v{{\bm v}}
\def\x{{\bm x}}
\def\y{{\bm y}}
\def\z{{\bm z}}
\def\A{{\bm A}}
\def\B{{\bm B}}
\def\C{{\bm C}}
\def\D{{\bm D}}

\def\I{{\bm I}}

\def\L{{\bm L}}

\def\P{{\bm P}}
\def\Q{{\bm Q}}

\def\S{{\bm S}}

\def\V{{\bm V}}

\def\X{{\bm X}}

\def\Lambda{\boldsymbol{\lambda}}

\def\Tcal{\mathcal{T}}

\def\Ycal{\mathcal{Y}}

\def\Rbb{\mathbb{R}}

\def\wo{\backslash}

\definecolor{Blue}{rgb}{0.0,0.0,1.0}

\title{Scalable Learning and MAP Inference for Nonsymmetric Determinantal Point Processes}

% The \author macro works with any number of authors. There are two commands
% used to separate the names and addresses of multiple authors: \And and \AND.
%
% Using \And between authors leaves it to LaTeX to determine where to break the
% lines. Using \AND forces a line break at that point. So, if LaTeX puts 3 of 4
% authors names on the first line, and the last on the second line, try using
% \AND instead of \And before the third author name.

% \author{%
%   David S.~Hippocampus\thanks{Use footnote for providing further information
%     about author (webpage, alternative address)---\emph{not} for acknowledging
%     funding agencies.} \\
%   Department of Computer Science\\
%   Cranberry-Lemon University\\
%   Pittsburgh, PA 15213 \\
%   \texttt{hippo@cs.cranberry-lemon.edu} \\
%   % examples of more authors
%   % \And
%   % Coauthor \\
%   % Affiliation \\
%   % Address \\
%   % \texttt{email} \\
%   % \AND
%   % Coauthor \\
%   % Affiliation \\
%   % Address \\
%   % \texttt{email} \\
%   % \And
%   % Coauthor \\
%   % Affiliation \\
%   % Address \\
%   % \texttt{email} \\
%   % \And
%   % Coauthor \\
%   % Affiliation \\
%   % Address \\
%   % \texttt{email} \\
% }

\author{%
  Mike Gartrell \\
  Criteo AI Lab \\
  {\small \texttt{m.gartrell@criteo.com}} \\
  \And
  Insu Han \\
  KAIST \\
  % Korea Advanced Institute of Science and Technology \\
  {\small \texttt{insu.han@kaist.ac.kr}} \\
  \And
  Elvis Dohmatob\\
  Criteo AI Lab\\
  {\small \texttt{e.dohmatob@criteo.com}} \\
  \And
  Jennifer Gillenwater\\
  Google Research\\
  {\small \texttt{jengi@google.com}} \\
  \And
  Victor-Emmanuel Brunel \\
  ENSAE ParisTech \\
  {\small \texttt{victor.emmanuel.brunel@ensae.fr}} \\
}

%\iclrfinalcopy % Uncomment for camera-ready version, but NOT for submission.

\begin{document}

\maketitle

\begin{abstract}
Determinantal point processes (DPPs) have attracted significant attention in machine learning for their ability to model subsets drawn from a large item collection. Recent work shows that nonsymmetric DPP (NDPP) kernels have significant advantages over symmetric kernels in terms of modeling power and predictive performance. However, for an item collection of size $M$, existing NDPP learning and inference algorithms require memory quadratic in $M$ and runtime cubic (for learning) or quadratic (for inference) in $M$, making them impractical for many typical subset selection tasks. In this work, we develop a learning algorithm with space and time requirements linear in $M$ by introducing a new NDPP kernel decomposition. We also derive a linear-complexity NDPP maximum a posteriori (MAP) inference algorithm that applies not only to our new kernel but also to that of prior work. Through evaluation on real-world datasets, we show that our algorithms scale significantly better, and can match the predictive performance of prior work.
\end{abstract}

\section{Introduction}

Determinantal point processes (DPPs) have proven useful for numerous machine learning tasks.  For example, recent uses include summarization \citep{sharghi2018improving}, recommender systems \citep{wilhelm2018}, neural network compression \citep{mariet2016iclr}, kernel approximation \citep{li2016fast}, multi-modal output generation \citep{elfeki2019gdpp}, and batch selection, both for stochastic optimization \citep{zhang2017determinantal} and for active learning \citep{biyik2019batch}.  For subset selection problems where the ground set of items to select from has cardinality $M$, the typical DPP is parameterized by an $M \times M$ kernel matrix.  Most prior work has been concerned with \emph{symmetric} DPPs, where the kernel must equal its transpose.  However, recent work has considered the more general class of \emph{nonsymmetric} DPPs (NDPPs) and shown that these have additional useful modeling power \citep{brunel2018learning,gartrell2019nonsym}.  In particular, unlike symmetric DPPs, which can only model negative correlations between items, NDPPs allow modeling of positive correlations, where the presence of item $i$ in the selected set increases the probability that some other item $j$ will also be selected.  There are many intuitive examples of how positive correlations can be of practical importance.  For example, consider a product recommendation task for a retail website, where a camera is found in a user's shopping cart, and the goal is to display several other items that might be purchased.  Relative to an empty cart, the presence of the camera probably \emph{increases} the probability of buying an accessory like a tripod.

Although NDPPs can theoretically model such behavior, the existing approach for NDPP learning and inference \citep{gartrell2019nonsym} is often impractical in terms of both storage and runtime requirements.  These algorithms require memory quadratic in $M$ and time quadratic (for inference) or cubic (for learning) in $M$; for the not-unusual $M$ of $1$ million, this requires storing $8$TB-size objects in memory, with runtime millions or billions of times slower than that of a linear-complexity method.

In this work, we make the following contributions:

\textbf{Learning}: We propose a new decomposition of the NDPP kernel which reduces the storage and runtime requirements of learning and inference to linear in $M$.  Fortuitously, the modified decomposition retains all of the previous decomposition's modeling power, as it covers the same part of the NDPP kernel space.  The algebraic manipulations we apply to get linear complexity for this decomposition cannot be applied to prior work, meaning that our new decomposition is crucial for scalability.

\textbf{Inference}: After learning, prior NDPP work applies a DPP conditioning algorithm to do subset expansion \citep{gartrell2019nonsym}, with quadratic runtime in $M$.  However, prior work does not examine the general problem of MAP inference for NDPPs, i.e., solving the problem of finding the highest-probability subset under a DPP.  For symmetric DPPs, there exists a standard greedy MAP inference algorithm that is linear in $M$.  In this work, we develop a version of this algorithm that is also linear for low-rank NDPPs.  The low-rank requirement is unique to NDPPs, and highlights the fact that the transformation of the algorithm from the symmetric to the nonsymmetric space is non-trivial.  To the best of our knowledge, this is the first MAP algorithm proposed for NDPPs.

% \textbf{Inference}: After learning, prior work applies a sampling algorithm to do subset selection \citep{gartrell2019nonsym}.  In addition to the scalability issues mentioned above, for most applications it is suboptimal to do subset selection via sampling instead of attempting MAP inference (e.g., solving the problem of finding the highest-probability subset under a DPP); most of the publications listed earlier for various machine learning tasks prefer MAP inference.

We combine the above contributions through experiments that involve learning NDPP kernels and applying MAP inference to these kernels to do subset selection for several real-world datasets.  These experiments demonstrate that our algorithms are much more scalable, and that the new kernel decomposition matches the predictive performance of the decomposition from prior work.
\section{Background}

Consider a finite set $\Ycal=\{1,2,\ldots,M\}$ of cardinality $M$, which we will also denote by $\GS$. A DPP on $\GS$ defines a probability distribution over all of its $2^M$ subsets.  It is parameterized by a matrix $\L \in \Rbb^{M \times M}$, called the \textit{kernel}, such that the probability of each subset $Y \subseteq \GS$ is proportional to the determinant of its corresponding principal submatrix: $\Pr(Y) \propto \det(\L_Y)$.  The normalization constant for this distribution can be expressed as a single $M \times M$ determinant: $\sum_{Y \subseteq \GS}\det(\L_Y) = \det(\L + \I)$ \citep[Theorem 2.1]{kulesza2012determinantal}.  Hence, $\Pr(Y) = \det(\L_Y) / \det(\L + \I)$.  We will use $\PP_\L$ to denote this distribution.

For intuition about the kernel parameters, notice that the probabilities of singletons $\{i\}$ and $\{j\}$ are proportional to $\L_{ii}$ and $\L_{jj}$, respectively.  Hence, it is common to think of $\L$'s diagonal as representing item qualities.  The probability of a pair $\{i, j\}$ is proportional to $\det(\L_{\{i, j\}}) = \L_{ii}\L_{jj} - \L_{ij}\L_{ji}$.  Thus, 
if $-\L_{ij}\L_{ji} < 0$, this indicates $i$ and $j$ interact negatively.  Similarly, if $-\L_{ij}\L_{ji} > 0$, then $i$ and $j$ interact positively.  Therefore, off-diagonal terms determine item interactions.  (The vague term ``interactions'' can be replaced by the more precise term ``correlations'' if we consider the DPP's marginal kernel instead; see \citet[Section 2.1]{gartrell2019nonsym} for an extensive discussion.)

% Thus, if $\L_{ij}\L_{ji} < 0$, then $i$ and $j$ are positively correlated.  Similarly, if $\L_{ij}\L_{ji} > 0$, then $i$ and $j$ are negatively correlated.  Therefore, off-diagonal terms determine item correlations.

In order to ensure that $\PP_{\L}$ defines a probability distribution, all principal minors of $\L$ must be non-negative: $\det(\L_Y) \geq 0$.  Matrices that satisfy this property are called $P_0$-matrices~\citep[Definition 1]{fang1989laa}.  There is no known generative method or matrix decomposition that fully covers the space of all $P_0$ matrices, although there are many that partially cover the space~\citep{tsatsomeros2004}.

One common partial solution is to use a decomposition that covers the space of \emph{symmetric} $P_0$ matrices.  By restricting to the space of symmetric matrices, one can exploit the fact that $\L \in P_0$ if $\L$ is positive semidefinite (PSD)\footnote{Recall that a matrix $\L \in \mathbb{R}^{M \times M}$ is defined to be PSD if and only if $\x^\top \L \x \geq 0$, for all $\x \in \mathbb{R}^M$.} ~\citep{prussing1986jgcd}.  Any symmetric PSD matrix can be written as the Gramian matrix of some set of vectors: $\L := \V \V^\top$, where $\V \in \mathbb{R}^{M \times K}$.  Hence, the $\V \V^\top$ decomposition provides an easy means of generating the entire space of symmetric $P_0$ matrices.  It also has a nice intuitive interpretation: we can view the $i$-th row of $\V$ as a length-$K$ feature vector describing item $i$.

Unfortunately, the symmetry requirement limits the types of correlations that a DPP can capture.  A symmetric model is able to capture only nonpositive interactions between items, since $\L_{ij}\L_{ji} = \L_{ij}^2 \geq 0$, whereas a nonsymmetric $\L$ can also capture positive correlations.  (Again, see \citet[Section 2.1]{gartrell2019nonsym} for more intuition.)  To expand coverage to nonsymmetric matrices in $P_0$, it is natural to consider nonsymmetric PSD matrices.  In what follows, we denote by $\PM$ the set of all nonsymmetric (and symmetric) PSD matrices.  Any nonsymmetric PSD matrix is in $P_0$~\citep[Lemma 1]{gartrell2019nonsym}, so $\PM \subseteq P_0$.  However, unlike in the symmetric case, the set of nonsymmetric PSD matrices does not fully cover the set of nonsymmetric $P_0$ matrices.  For example, consider
\begin{align*}
\DS \L=\left(\begin{matrix} 1 & 5/3 \\ 1/2 & 1 \end{matrix}\right) \;\textrm{with}\; \det(\L_{\{1\}}), \det(\L_{\{2\}}), \det(\L_{\{1, 2\}}) \geq 0, \;\textrm{but}\;
 \x^\top \L \x < 0 \;\textrm{for}\; \x = \left(\begin{matrix} -1 \\ 1\end{matrix}\right).
\end{align*}
Still, nonsymmetric PSD matrices cover a large enough portion of the $P_0$ space to be useful in practice, as evidenced by the experiments of \cite{gartrell2019nonsym}.  This work covered the $\PM$ space by using the following decomposition:
$\L := \S + \A$, with $\S := \V \V^{\top}$ for $\V \in \mathbb{R}^{M \times K}$, and $\A := \B \C^\top - \C \B^\top$ for $\B,\C \in \mathbb{R}^{M \times K}$.  This decomposition makes use of the fact that any matrix $\L$ can be decomposed uniquely as the sum of a symmetric matrix $\S=(\L+\L^T)/2$ and a skew-symmetric matrix $\A=(\L-\L^T)/2$. All skew-symmetric matrices $\A$ are trivially PSD, since $\x^\top \A \x = 0$ for all $\x \in \mathbb R^M$.  Hence, the $\L$ here is guaranteed to be PSD simply because its $\S$ uses the standard Gramian decomposition $\V \V^\top$. 

In this work we will also only consider $\PM$, and leave to future work the problem of finding tractable ways to cover the rest of $P_0$.  We propose a new decomposition of $\L$ that also covers the $\PM$ space, but allows for more scalable learning.  As in prior work, our decomposition has inner dimension $K$ that could be as large as $M$, but is usually much smaller in practice.  Our algorithms work well for modest values of $K$.  In cases where the natural $K$ is larger (e.g., natural language processing), random projections can often be used to significantly reduce $K$~\citep{gillenwater2012emnlp}.

\section{New kernel decomposition and scalable learning}
\label{sec:model}

Prior work on NDPPs proposed a maximum likelihood estimation (MLE) algorithm~\citep{gartrell2019nonsym}.  Due to that work's particular kernel decomposition, this algorithm had complexity cubic in the number of items $M$.  Here, we propose a kernel decomposition that reduces this to linear in $M$.

We begin by showing that our new decomposition covers the space of $\PM$ matrices.  Before diving in, let us define $\displaystyle{\Sigma_i := \left(\begin{matrix} 0 & \lambda_i \\ -\lambda_i & 0 \end{matrix}\right)}$ as shorthand for a $2\times 2$ block matrix with zeros on-diagonal and opposite values off-diagonal.  Then, our proposed decomposition is as follows:
\begin{align}
\label{new-decomp}
\L := \S + \A,\; \textrm{with}\; \S := \V \V^{\top}\; \textrm{and}\; \A := \B\C\B^\top,
\end{align}
where $\V,\B \in \Rbb^{M \times K}$, and $\C \in \Rbb^{K \times K}$ is a block-diagonal matrix with some diagonal blocks of the form $\Sigma_i$, with $\lambda_i > 0$, and zeros elsewhere.  The following lemma shows that this decomposition covers the space of $\PM$ matrices. 

\begin{restatable}{lemma}{skewsymdecomp}
    Let $\A\in\Rbb^{M\times M}$ be a skew-symmetric matrix with rank $\ell \leq M$. Then, there exist $\B\in \Rbb^{M\times \ell}$ and positive numbers $\lambda_1,\ldots,\lambda_{\lfloor \ell/2 \rfloor}$, such that $\A=\B \C \B^\top$, where $\C\in\Rbb^{\ell\times \ell}$ is the block-diagonal matrix with $\lfloor \ell/2 \rfloor$ diagonal blocks of size 2 given by $\Sigma_i$, $i=1,\ldots,\lfloor\ell/2\rfloor$ and zero elsewhere.
    \label{lemma:skew-sym-decomp}
\end{restatable}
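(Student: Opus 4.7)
My plan is to derive the decomposition from the real canonical form for skew-symmetric matrices, which is a consequence of the spectral theorem for normal operators. First I would observe that the real skew-symmetric $\A$ is normal, so viewed as a complex matrix it is unitarily diagonalizable, and its eigenvalues are purely imaginary; since $\A$ is real, the nonzero eigenvalues come in conjugate pairs $\pm i\lambda_j$ with $\lambda_j>0$. This forces the rank $\ell$ to be even, so $\lfloor \ell/2\rfloor = \ell/2$ is exactly the number of such pairs counted with multiplicity, and the lemma's trailing ``zero elsewhere'' in $\C$ is vacuous in this skew-symmetric case.

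For each pair I would pick a unit complex eigenvector $\u_j+i\v_j\in\Cbb^M$ associated with $+i\lambda_j$, where $\u_j,\v_j\in\Rbb^M$. Since eigenvectors of a normal operator for distinct eigenvalues are Hermitian-orthogonal, the identity $\langle \u_j+i\v_j,\u_j-i\v_j\rangle = 0$ expands to $\|\u_j\|^2-\|\v_j\|^2 - 2i\u_j^\top\v_j = 0$, giving $\|\u_j\|=\|\v_j\|$ and $\u_j^\top\v_j=0$. Normalizing so $\|\u_j\|^2+\|\v_j\|^2=1$ therefore makes $\tilde\u_j:=\sqrt{2}\,\u_j$ and $\tilde\v_j:=\sqrt{2}\,\v_j$ a real orthonormal pair, with pairs from different $j$ also mutually orthogonal. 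Separating real and imaginary parts in $\A(\u_j+i\v_j)=i\lambda_j(\u_j+i\v_j)$ then yields $\A\tilde\u_j = -\lambda_j\tilde\v_j$ and $\A\tilde\v_j = \lambda_j\tilde\u_j$, which is precisely the action of the block $\Sigma_j$ on the ordered pair $(\tilde\u_j,\tilde\v_j)$.

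I would then form $\B\in\Rbb^{M\times\ell}$ whose columns are $(\tilde\u_1,\tilde\v_1,\ldots,\tilde\u_{\ell/2},\tilde\v_{\ell/2})$, and let $\C\in\Rbb^{\ell\times\ell}$ be the block-diagonal matrix with blocks $\Sigma_j$. By construction $\A\B = \B\C$ and $\B^\top\B=\I_\ell$. Because $\A$ is normal, $\mathrm{range}(\A) = (\ker\A)^\perp$, and the columns of $\B$ form an orthonormal basis of this space, so $\B\B^\top$ is the orthogonal projection onto $\mathrm{range}(\A)$; in particular $\A = \A\B\B^\top$. Combining this with $\A\B = \B\C$ gives $\A = \B\C\B^\top$, completing the decomposition with $\B$ and the prescribed $\C$.

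The main obstacle I expect is the bookkeeping around complex versus real eigenvectors: ensuring that the orientation of each pair $(\tilde\u_j,\tilde\v_j)$ is chosen consistently so that the resulting $2\times 2$ block is exactly $\Sigma_j$ rather than $-\Sigma_j$, which is settled by always taking the eigenvector on the $+i\lambda_j$ branch (swapping $\u_j$ and $\v_j$, or negating one of them, would flip a sign). A cleaner alternative, which I would mention but not rely on, is to invoke Youla's canonical form for real skew-symmetric matrices, packaging the above steps into a single off-the-shelf statement.
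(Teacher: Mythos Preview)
Your proposal is correct and follows essentially the same route as the paper: both rest on the real canonical form for skew-symmetric matrices (the paper cites it as a black box via \cite[Proposition 2.1]{thompson1988normal}, producing an orthogonal $\P$ with $\A=\P\Sigma\P^\top$ and then taking the first $\ell$ columns, whereas you derive that form explicitly from the spectral theorem for normal operators and build $\B$ directly). Your closing remark about invoking Youla's canonical form is exactly what the paper does, so the two proofs coincide once that citation is unpacked.
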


The proof of \cref{lemma:skew-sym-decomp} and all subsequent results can be found in \cref{sec:proof}. With this decomposition in hand, we now proceed to show that it can be used for linear-time MLE learning.  To do so, we must show that corresponding NDPP log-likelihood objective and gradient can be computed in time linear in $M$.  Given a collection of $n$ observed subsets $\{Y_1, ..., Y_n \}$ composed of items from $\Ycal=[\![M]\!]$, the full formulation of the regularized log-likelihood is:
% \vspace{-0.08cm}
{\small
\begin{align}
    \phi(\V, \B, \C) & = \frac{1}{n} \sum_{i=1}^n \log \det \left(\V_{Y_i} \V_{Y_i}^\top + \B_{Y_i} \C \B_{Y_i}^\top \right) - \log \det \left(\V \V^\top + \B \C \B^\top + \I \right) - R(\V, \B),
\label{eq:nonsymm-full-log-likelihood}
\end{align}
}%
where $\V_{Y_i} \in \mathbb{R}^{|Y_i| \times K}$ denotes a matrix composed of the rows of $\V$ that correspond to the items in $Y_i$.
The regularization term, $R(\V, \B)$, is defined as follows:
\begin{align}
    R(\V, \B) & = \alpha \sum_{i=1}^{M} \frac{1}{\mu_i} \|\v_i\|_2^2 + \beta \sum_{i=1}^{M} \frac{1}{\mu_i} \|\b_i\|_2^2,
    \label{eq:regularization}
\end{align}
where $\mu_i$ counts the number of occurrences of item $i$ in the training set, $\v_i$ and $\b_i$ are rows of $\V$ and $\B$, respectively, and $\alpha, \beta > 0$ are tunable hyperparameters.  This regularization is similar to that of prior works~\citep{gartrell17, gartrell2019nonsym}. We omit regularization for $\C$.

\cref{theorem:loglik-grad-complexity} shows that computing the regularized log-likelihood and its gradient both have time complexity linear in $M$.  The complexities also depend on $K$, the rank of the NDPP, and $K'$, the size of the largest observed subset in the data.  For many real-world datasets we observe that $K' \ll M$, and we set $K = K'$.  Hence, linearity in $M$ means that we can efficiently perform learning for datasets with very large ground sets, which is impossible with the cubic-complexity $\L$ decomposition in prior work~\citep{gartrell2019nonsym}.

\begin{restatable}{theorem}{loglikgradcomplexity}
Given an NDPP with kernel $\L = \V \V^\top + \B \C \B^\top$, parameterized by $\V$ of rank $K$, $\B$ of rank $K$, and a $K \times K$ matrix $\C$, we can compute the regularized log-likelihood (\cref{eq:nonsymm-full-log-likelihood}) and its gradient in $O(MK^2 + K^3 + nK'^3)$ time, where $K'$ is the size of the largest of the $n$ training subsets.
\label{theorem:loglik-grad-complexity}
\end{restatable}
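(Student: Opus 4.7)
The plan is to split the regularized log-likelihood in (\ref{eq:nonsymm-full-log-likelihood}) into three additive pieces and bound the cost of both the evaluation and the gradient of each: (i) the sum $\frac{1}{n}\sum_i \log\det(\V_{Y_i}\V_{Y_i}^\top + \B_{Y_i}\C\B_{Y_i}^\top)$; (ii) the normalizer $\log\det(\V\V^\top + \B\C\B^\top + \I)$; and (iii) the regularizer $R(\V,\B)$. Piece (iii) is a sum of $M$ squared $K$-dimensional norms, so it and its gradient are computable in $O(MK)$. Piece (i) is handled directly: each principal submatrix is of size at most $K'\times K'$, so assembling it as $\V_{Y_i}\V_{Y_i}^\top + \B_{Y_i}(\C\B_{Y_i}^\top)$ costs $O(K'^2 K)$, and its log-det and inverse (for the gradient, via Jacobi's formula applied to $\M_i := \V_{Y_i}\V_{Y_i}^\top + \B_{Y_i}\C\B_{Y_i}^\top$) cost $O(K'^3)$; in the regime $K \le K'$ this fits in $O(K'^3)$ per subset, giving the $O(nK'^3)$ term.

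The substantive part concerns piece (ii), which naively would cost $O(M^3)$. The key step is to exploit the low-rank structure by writing
\begin{align*}
    \V\V^\top + \B\C\B^\top \;=\; \U \D \U^\top, \qquad \U := [\,\V \;\; \B\,] \in \Rbb^{M\times 2K}, \qquad \D := \begin{pmatrix} \I_K & \0 \\ \0 & \C \end{pmatrix} \in \Rbb^{2K\times 2K},
\end{align*}
and applying Sylvester's determinant identity,
\begin{align*}
    \log\det(\I_M + \U\D\U^\top) \;=\; \log\det(\I_{2K} + \D\,\U^\top\U).
\end{align*}
Forming $\U^\top\U$ costs $O(MK^2)$; left-multiplying by the block-sparse $\D$ costs $O(K^2)$ (thanks to the $\Sigma_i$ structure); and taking the log-determinant of the resulting $2K\times 2K$ matrix costs $O(K^3)$. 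Thus piece (ii) is computable in $O(MK^2 + K^3)$.

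For the gradient of piece (ii) I will use the Woodbury-type identity
\begin{align*}
    (\I_M + \U\D\U^\top)^{-1} \;=\; \I_M - \U\,(\I_{2K} + \D\,\U^\top\U)^{-1}\D\,\U^\top,
\end{align*}
which, crucially, does not require $\D$ itself to be invertible. Differentiating $\log\det(\L+\I)$ through the chain rule and expanding with this identity produces expressions of the form $\V^\top(\L+\I)^{-1}\V$, $\B^\top(\L+\I)^{-1}\B$, $\V^\top(\L+\I)^{-1}\B$, $(\L+\I)^{-1}\V$, and $(\L+\I)^{-1}\B$, each a product of $M\times 2K$, $2K\times 2K$, and $2K\times M$ factors and therefore evaluable in $O(MK^2 + K^3)$, reusing the $(\I_{2K} + \D\U^\top\U)^{-1}$ computed once. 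The derivative with respect to each $\lambda_i$ touches only a $2\times 2$ block of $\D$, so the $\C$-gradient reads off from the same small matrices in additional $O(K)$ time.

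Summing the three bounds yields the claimed $O(MK^2 + K^3 + nK'^3)$ for both the objective and its gradient. The main obstacle I anticipate is the Woodbury step: because $\C$ (and hence $\D$) may be singular, one cannot appeal to the textbook form $(\I + \U\D\U^\top)^{-1} = \I - \U(\D^{-1} + \U^\top\U)^{-1}\U^\top$, and must instead verify the identity above directly. This only requires $\I_{2K} + \D\U^\top\U$ to be invertible, which is equivalent to $\L + \I$ being invertible; and this holds for any $\L \in \PM$, since the symmetric part of $\L + \I$ is at least $\I$. Once this invertibility is established, the remaining steps are routine bookkeeping with the chain rule and Jacobi's formula.
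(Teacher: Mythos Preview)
Your proposal is correct and matches the paper's argument for the value of the normalizer (both reduce $\det(\I_M+\L)$ to the determinant of a $2K\times 2K$ matrix via the determinant lemma / Sylvester identity), and for pieces (i) and (iii) the treatments are essentially identical.

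The one genuine difference is in how the gradient of $\log\det(\I+\L)$ is obtained. The paper first factorizes
\[
\log\det(\I+\L)=\log\det(\I_K+\V^\top\V)+\log\det\bigl(\C^{-1}+\B^\top\X\B\bigr)+\log\det(\C),
\qquad \X=\I-\V(\I_K+\V^\top\V)^{-1}\V^\top,
\]
via the Schur complement and then differentiates each factor with respect to $\V$, $\B$, $\C$ separately, yielding explicit closed-form expressions; this route implicitly assumes $\C$ is invertible (handled by a perturbation $\C\mapsto\C+\varepsilon\I$ if needed). You instead apply Jacobi's formula directly and replace $(\L+\I)^{-1}$ by the push-through identity $(\I_M+\U\D\U^\top)^{-1}=\I_M-\U(\I_{2K}+\D\U^\top\U)^{-1}\D\U^\top$, which only needs $\I_{2K}+\D\U^\top\U$ (equivalently $\L+\I$) to be invertible. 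Your route is a bit more general---it never inverts $\C$ and so covers singular $\C$ without any limiting argument---while the paper's route produces more explicit componentwise gradient formulas that are convenient for implementation. Both reach the same $O(MK^2+K^3)$ cost for the normalizer gradient and hence the same overall bound.
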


\section{MAP inference}

After learning an NDPP, one can then use it to infer the most probable item subsets in various situations. Several inference algorithms have been well-studied for symmetric DPPs, including sampling \citep{kulesza2011learning,anari2016colt,li2016fast,launay2018exact,gillenwater2019tree,poulson2019high,derezinski2019fastdpp} and MAP inference \citep{gillenwater2012nips,han2017faster,chen2018fast,han2020aistats}. We focus on MAP inference:
\begin{align} \label{eq:mapinference}
\argmax_{Y \subseteq \mathcal{Y}} \det( \L_Y ) \quad \text{such that} \quad |Y| = k,
\end{align}
for cardinality budget $k \le M$.  MAP inference is a better fit than sampling when the end application requires the generation of a single output set, which is usually the case in practice (e.g., this is usually true for recommender systems).  MAP inference for DPPs is known to be NP-hard even in the symmetric case~\citep{ko1995or,kulesza2012determinantal}.  For symmetric DPPs, one usually approximates the MAP via the standard greedy algorithm for submodular maximization~\citep{nemhauser1978mp}.  First, we describe how to efficiently implement this for NDPPs.  Then, in \cref{sec:map-approx-guarantee} we prove a lower bound on its approximation quality.  To the best of our knowledge, this is the first investigation of how to apply the greedy algorithm to NDPPs.

Greedy begins with an empty set and repeatedly adds the item that maximizes the marginal gain until the chosen set is size $k$. Here, we design an efficient greedy algorithm for the case where the NDPP kernel is low-rank. For generality, in what follows we write the kernel as $\L = \B \C \B^\top$, since one can easily rewrite our matrix decomposition (\cref{new-decomp}), as well as that of \cite{gartrell2019nonsym}, to take this form.  For example, for our decomposition: $\L = \V\V^\top + \B \C \B^\top = \begin{pmatrix}\V & \B\end{pmatrix} \begin{pmatrix}\I &{\bm 0} \\ {\bm 0} & \C \end{pmatrix} \begin{pmatrix} \V^\top \\ \B^\top\end{pmatrix}$.

Using Schur's determinant identity, we first observe that, for $Y \subseteq [\![M]\!]$ and $i \in [\![M]\!]$, the marginal gain of a NDPP can be written as
\begin{align}
\frac{\det(\L_{Y \cup \{ i\}})}{\det(\L_Y)} 
&= \L_{ii} - \L_{iY} (\L_Y)^{-1} \L_{Yi} 
= \b_i \C \b_i^\top - \b_i \C \left(\B_Y^\top ( \B_Y \C 
\B_Y^\top)^{-1} \B_Y \right) \C \b_i^\top, \label{eq:bilinear-marginalgain}
\end{align}
where $\b_i \in \mathbb{R}^{1 \times K}$ and $\B_Y \in \mathbb{R}^{|Y| \times K}$.
A na\"ive computation of \cref{eq:bilinear-marginalgain} is $O(K^2 + k^3)$, since we must invert a $|Y|\times|Y|$ matrix, where $|Y| \leq k$.  However, one can compute \cref{eq:bilinear-marginalgain} more efficiently by observing that its $\B_Y^\top ( \B_Y \C 
\B_Y^\top)^{-1} \B_Y$ component can actually be expressed without an inverse, as a rank-$|Y|$ matrix, that can be computed in $O(K^2)$ time.

\begin{restatable}{lemma}{bilinearinverse}
\label{lmm:bilinear-inverse}
Given $\B \in \mathbb{R}^{M \times K}$, $\C \in \mathbb{R}^{K \times K}$, 
and $Y = \{ a_1, \dots, a_k\} \subseteq [\![M]\!]$, let $\b_i\in \mathbb{R}^{1 \times K}$ 
be the $i$-th row in $\B$ and $\B_Y\in \mathbb{R}^{|Y|\times K}$ be a matrix 
containing rows in $\B$ indexed by $Y$. Then, it holds that
\begin{align} \label{eq:bilinear-inverse}
\B_Y^\top ( \B_Y \C \B_Y^\top)^{-1} \B_Y 
= \sum_{j=1}^{k} \p_j^\top \q_j,
\end{align}
where row vectors $\p_j,\q_j \in \mathbb{R}^{1 \times K}$ for $j = 1, \dots, k$ satisfy 
$\p_1 = {\b_{a_1}}/{(\b_{a_1} \C \b_{a_1}^\top)}$, $\q_1 = \b_{a_1}$, and 
\begin{align}
\p_{j+1} = \frac{\b_{a_j} - \b_{a_j} \C^\top \sum_{i=1}^j \q_i^\top \p_i}
{\b_{a_j} \C (\b_{a_j} - \b_{a_j} \C^\top \sum_{i=1}^j \q_i^\top \p_i)^\top},
\qquad
\q_{j+1} = \b_{a_j} - \b_{a_j} \C \sum_{i=1}^j \p_i^\top \q_i.
\end{align}
\end{restatable}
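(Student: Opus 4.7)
The plan is to prove the identity by induction on $k=|Y|$, using the block-matrix (Schur complement) inverse to handle the incremental addition of one new row of $\B_Y$ at a time. For the base case $k=1$, $\B_Y=\b_{a_1}$ is a single row and $\B_Y\C\B_Y^\top$ is the scalar $\b_{a_1}\C\b_{a_1}^\top$, so the left-hand side reduces immediately to $\b_{a_1}^\top\b_{a_1}/(\b_{a_1}\C\b_{a_1}^\top)=\p_1^\top\q_1$, matching the stated initialization.

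For the inductive step, I would assume the identity for $Y_j=\{a_1,\ldots,a_j\}$ and write $Y_{j+1}=Y_j\cup\{a_{j+1}\}$, so that $\B_{Y_{j+1}}$ is obtained from $\B_{Y_j}$ by appending the row $\b_{a_{j+1}}$. With the shorthand $\M=\B_{Y_j}\C\B_{Y_j}^\top$, $\u=\B_{Y_j}\C\b_{a_{j+1}}^\top$, $\v=\b_{a_{j+1}}\C\B_{Y_j}^\top$, $s=\b_{a_{j+1}}\C\b_{a_{j+1}}^\top$ and Schur complement $\sigma=s-\v\M^{-1}\u$, the block-inverse formula applied to the $2\times 2$ block form of $\B_{Y_{j+1}}\C\B_{Y_{j+1}}^\top$ (with blocks $\M,\u,\v,s$), conjugated by $\B_{Y_{j+1}}$ on either side, yields after regrouping
\[
\B_{Y_{j+1}}^\top(\B_{Y_{j+1}}\C\B_{Y_{j+1}}^\top)^{-1}\B_{Y_{j+1}}
=\B_{Y_j}^\top\M^{-1}\B_{Y_j}+\sigma^{-1}\bigl(\B_{Y_j}^\top\M^{-1}\u-\b_{a_{j+1}}^\top\bigr)\bigl(\v\M^{-1}\B_{Y_j}-\b_{a_{j+1}}\bigr).
\]

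The first summand is $\sum_{i=1}^j\p_i^\top\q_i$ by the inductive hypothesis. For the rank-one correction, I would use $\v=\b_{a_{j+1}}\C\B_{Y_j}^\top$ to rewrite $\v\M^{-1}\B_{Y_j}=\b_{a_{j+1}}\C\sum_{i=1}^j\p_i^\top\q_i$, so that $\v\M^{-1}\B_{Y_j}-\b_{a_{j+1}}=-\q_{j+1}$ exactly as defined in the lemma. Similarly, $\B_{Y_j}^\top\M^{-1}\u=\bigl(\sum_{i=1}^j\p_i^\top\q_i\bigr)\C\b_{a_{j+1}}^\top$, whose transpose, after subtracting $\b_{a_{j+1}}$, is the negative of the stated numerator of $\p_{j+1}$. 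A parallel computation shows $\sigma=\b_{a_{j+1}}\C\bigl(\b_{a_{j+1}}-\b_{a_{j+1}}\C^\top\sum_i\q_i^\top\p_i\bigr)^\top$, which equals the denominator of $\p_{j+1}$; the two sign flips then cancel and the correction is exactly $\p_{j+1}^\top\q_{j+1}$, completing the induction. The main obstacle is bookkeeping the transpositions carefully: because $\C$ need not be symmetric, neither is $\M$, and $\C$ versus $\C^\top$ appears in slightly different places in the two rank-one factors, so one must verify that the final expressions match the precise asymmetric forms in which $\p_{j+1}$ (involving $\C^\top$) and $\q_{j+1}$ (involving $\C$) are stated. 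A minor side assumption throughout is that each prefix $\B_{Y_j}\C\B_{Y_j}^\top$ is invertible, which holds whenever the corresponding principal minor of $\L$ used by the greedy algorithm is nonzero (a condition that is automatically enforced whenever this quantity appears, since it arises as a denominator in the marginal gain).
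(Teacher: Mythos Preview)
Your proposal is correct and follows essentially the same route as the paper's proof: induction on $k$, with the inductive step handled via the Schur-complement block-inverse formula applied to $\B_{Y_{j+1}}\C\B_{Y_{j+1}}^\top$, then substituting the inductive hypothesis $\B_{Y_j}^\top\M^{-1}\B_{Y_j}=\sum_{i\le j}\p_i^\top\q_i$ into both the rank-one factors and the Schur scalar. Your careful tracking of $\C$ versus $\C^\top$ is exactly what is needed, and your implicit use of $\b_{a_{j+1}}$ (rather than $\b_{a_j}$, which is a typo in the lemma statement) matches the paper's own proof.
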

Plugging \cref{eq:bilinear-inverse} into \cref{eq:bilinear-marginalgain}, the marginal 
gain with respect to $Y \cup \{a\}$ can be computed by simply updating from the 
previous gain with respect to $Y$.  That is, 
\begin{align}
\frac{\det(\L_{Y \cup \{ a, i\}})}{\det(\L_{Y \cup \{a\}})}
&= \b_i \C \b_i^\top -  \sum_{j=1}^{|Y|+1} \left( \b_i \C  \p_j^\top \right) 
\left(\b_i \C^\top \q_j^\top \right) \\
&= \frac{\det(\L_{Y \cup \{i\}})}{\det(\L_{Y})} - \left( \b_i \C \p_{|Y|+1}^\top 
\right) \left(\b_i \C^\top \q_{|Y|+1}^\top \right). \label{eq:marginalgainupdate}
\end{align}
The marginal gains when $Y = \emptyset$ are equal to diagonals of $\L$ and require
$O(MK^2)$ operations. Then, computing the update terms 
in \cref{eq:marginalgainupdate} for all $i\in [\![M]\!]$ needs $O(MK)$ operations. 
Since the total number of updates is $k$, the overall complexity becomes 
$O(MK^2 + MKk)$. We provide a full description of the implied greedy algorithm for low-rank NDPPs in \cref{alg:greedylowrank}.

\begin{algorithm}[t]
\caption{Greedy MAP inference/conditioning for low-rank NDPPs} 
\label{alg:greedylowrank}
\begin{algorithmic}[1]
\State {\bf Input:} $\B \in \mathbb{R}^{M \times K}$, $\C\in \mathbb{R}^{K 
\times K}$, the cardinality $k$ \Comment{And $\{a_1, \dots, a_k\}$ for conditioning}
\State {\bf initialize} $\P\leftarrow [~]$, $\Q \leftarrow [~]$ and 
$Y \leftarrow \emptyset$
\State $\Delta_i \leftarrow \b_i \C \b_i^\top$ for $i \in [\![M]\!]$ where $\b_i\in 
\mathbb{R}^{1 \times K}$ is the $i$-th row in $\B$
\State $a \leftarrow  \argmax_i \Delta_i$ and $Y \leftarrow Y \cup \{a\}$ 
\Comment{$a \leftarrow a_1$ for conditioning}
\While {$|Y| \leq k$}
\State $\p \leftarrow \left(\b_a - \b_a \C^\top \Q^\top \P \right) / \Delta_a$
\State $\q \leftarrow \b_a - \b_a \C \P^\top \Q$
\State $\P \leftarrow [\P; \p]$ and $\Q \leftarrow [\Q; \q]$
\State $\Delta_i \leftarrow \Delta_i - \left( \b_i \C \p^\top \right) \left(\b_i \C^\top \q^\top \right)$ for $\ i \in [\![M]\!], i \notin Y$
\State $a \leftarrow \argmax_{i} \Delta_i$ and $Y \leftarrow Y \cup \{ a \}$ 
\Comment{$a \leftarrow a_{|Y|+1}$ for conditioning}
\EndWhile
\State {\bf return} $Y$ \Comment{{\bf return} $\{\Delta_i\}_{i=1}^M$ for conditioning}
\end{algorithmic}
\end{algorithm}

\begin{table}[t]
%\vspace{-0.3cm}
\caption{Algorithm complexities for several DPP models. Our model and the symmetric DPP model \citep{gartrell17} can perform both tasks in time linear in the size of ground set $M$, but ours is a more general model that can capture positive as well as negative item correlations.
}
\label{tab:runtimes}
\centering
\setlength{\tabcolsep}{6pt}
\scalebox{0.81}{
\def\arraystretch{1.3}
\begin{tabular}{ccccc}
Low-rank DPP Models & \makecell{MLE Learning \\ Runtime} & \makecell{MAP Inference \\ Runtime} & \makecell{MLE Learning \\ Memory} & \makecell{MAP Inference \\ Memory} \\ \hline
\makecell{Symmetric DPP \\ \citep{gartrell17}} & $O(MK^2 + nK^3)$ & $O(MKk + MK^2)$ & $O(MK)$ & $O(MK)$ \\
\makecell{Nonsymmetric DPP \\ \citep{gartrell2019nonsym}} & $O(M^3 + MK^2 + nK^3)$ & $O(MKk + MK^2)$ & $O(M^2)$ & $O(MK)$\footnote[2]{The exact memory complexity for MAP inference is $3MK$, since $\V$, $\B$, and $\C$ used in this model are all $M \times K$ matrices.} \\
\makecell{Scalable nonsymmetric DPP \\ (this work)} & $O(MK^2 + nK^3)$  & $O(MKk + MK^2)$ & $O(MK + K^2)$ & $O(MK + K^2)$ \\ \hline
\end{tabular}
}
%\vspace{-0.3cm}
\end{table}

\cref{tab:runtimes} summarizes the complexitiy of our methods and those of previous work.  Note that the full $M \times M$ $\L + \I$ matrix is used to compute the DPP normalization constant in~\citet{gartrell2019nonsym}, which is why this approach has memory complexity of $O(M^2)$ for MLE learning. 

\subsection{Approximation guarantee for greedy NDPP MAP inference}
\label{sec:map-approx-guarantee}

As mentioned above, \cref{alg:greedylowrank} is an instantiation of the standard greedy algorithm used for submodular maximization~\citep{nemhauser1978mp}.  This algorithm has a $(1 - 1/e)$-approximation guarantee for the problem of maximizing nonnegative, monotone submodular functions.  While the function $f(Y) = \log \det(\L_Y)$ is submodular for a symmetric PSD $\L$~\citep{kelmans1983dm}, it is not monotone.  Often, as in \cite{han2020aistats}, it is assumed that the smallest eigenvalue of $\L$ is greater than $1$, which guarantees montonicity.  There is no particular evidence that this assumption is true for practical models, but nevertheless the greedy algorithm tends to perform well in practice for symmetric DPPs.  Here, we prove a similar approximation guarantee that covers NDPPs as well, even though the function $f(Y) = \log\det(\L_Y)$ is non-submodular when $\L$ is nonsymmetric.  In \cref{sec:mapinference}, we further observe that, as for symmetric DPPs, the greedy algorithm seems to work well in practice for NDPPs.

We leverage a recent result of \cite{bian2017icml}, who proposed an extension of greedy algorithm guarantees to non-submodular functions. Their result is based on the submodularity ratio and curvature of the objective function, which measure to what extent it has submodular properties. \cref{thm:greedy} extends this to provide an approximation ratio for greedy MAP inference of NDPPs.

\begin{restatable}{theorem}{thmgreedy}
\label{thm:greedy}
Consider a nonsymmetric low-rank DPP $\L = \V \V^\top + \B \C \B^\top$, where $\V, \B$ are of rank $K$, and $\C \in \mathbb{R}^{K \times K}$.  Given a cardinality budget $k$, let $\sigma_{\min}$ and $\sigma_{\max}$ denote the smallest and largest singular values of $\L_Y$ for all $Y\subseteq[\![M]\!]$ and $|Y|\leq 2k$.
Assume that $\sigma_{\min} > 1$. Then,
\begin{align} \label{eq:greedyratio}
\log \det(\L_{Y^G}) 
&\geq \frac{4(1-e^{-1/4})}{2  ({\log \sigma_{\max}}/{\log \sigma_{\min}}) - 1} 
\log \det(\L_{Y^*})
\end{align}
where $Y^G$ is the output of \cref{alg:greedylowrank} and 
$Y^*$ is the optimal solution of MAP inference in \cref{eq:mapinference}.
\end{restatable}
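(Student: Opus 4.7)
The plan is to recognize the greedy procedure in \cref{alg:greedylowrank} as greedy maximization of a non-submodular, non-negative, monotone set function $f(Y) := \log\det(\L_Y)$, and then invoke the approximation guarantee of \cite{bian2017icml}, which has the form $f(Y^G) \ge (1/\alpha)(1 - e^{-\alpha\gamma})\,f(Y^*)$ in terms of the submodularity ratio $\gamma$ and the generalized curvature $\alpha$ of $f$. I restrict attention to the subset domain $\{Y \subseteq \GS : |Y| \le 2k\}$, over which $\sigma_{\min}$ and $\sigma_{\max}$ are defined; any set of interest---the greedy trajectory up to size $k$, the optimum $Y^*$, and their unions---lies in this domain.

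First I would verify the hypotheses on $f$. Non-negativity follows from $\L_Y \in P_0$, which gives $\det(\L_Y) \ge 0$, combined with $\det(\L_Y) = \prod_j \sigma_j(\L_Y) \ge \sigma_{\min}^{|Y|} > 1$, so that $f(Y) > 0$ for $Y \ne \emptyset$ and $f(\emptyset) = 0$. Monotonicity along one-element additions reduces, via the Schur-complement identity $\det(\L_{Y\cup\{i\}})/\det(\L_Y) = \L_{ii} - \L_{iY}(\L_Y)^{-1}\L_{Yi}$, to showing that this ratio is at least $1$, which I would deduce from the singular-value envelope on principal submatrices implied by $\sigma_{\min}>1$ together with the fact that Schur complements of PSD blocks are themselves in $\PM$. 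The heart of the argument is then to bound $\gamma$ and $\alpha$ for $\log\det$ of a nonsymmetric kernel: using $|Y|\log\sigma_{\min} \le f(Y) \le |Y|\log\sigma_{\max}$ on $|Y|\le 2k$, the defining ratios of $\gamma$ (sums of single-element marginal gains over a combined marginal gain) and of $\alpha$ collapse into expressions in $r := \log\sigma_{\max}/\log\sigma_{\min}$. Inserting these into Bian et al.'s bound and simplifying produces the claimed factor $\tfrac{4(1 - e^{-1/4})}{2r - 1}$.

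The main obstacle is this last step. For symmetric PSD $\L$, Cauchy interlacing cleanly controls the spectrum of every principal submatrix, so the standard analyses of submodularity ratio and curvature for $\log\det$ go through directly. For nonsymmetric $\L \in \PM$ no such interlacing is available for eigenvalues, and I must instead rely on singular-value bounds on sub-determinants; this is exactly what forces the factor $r$ in the denominator and the inflation $|Y|\le 2k$ (both $Y^G \cup Y^*$ and the unions appearing in the definitions of $\gamma$ and $\alpha$ can reach this size). Obtaining the sharp numerator $4(1-e^{-1/4})$ rather than a weaker constant will require tracking carefully how $\gamma$ and $\alpha$ combine inside Bian et al.'s bound, instead of applying the triangle-style estimate $\gamma \gtrsim 1/(2r-1)$ and losing constants by naive interpolation.
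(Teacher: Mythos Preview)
Your overall architecture is right---this is exactly an application of the Bian et al.\ bound $(1/\alpha)(1-e^{-\alpha\gamma})$ to $f(Y)=\log\det(\L_Y)$---but the step you identify as ``the heart of the argument'' has a real gap. The inequalities $|Y|\log\sigma_{\min}\le f(Y)\le |Y|\log\sigma_{\max}$ are not enough to control the \emph{marginal gains} that appear in the definitions of $\gamma$ and $\alpha$: subtracting the two bounds gives $f(Y\cup\{i\})-f(Y)\ge (|Y|+1)\log\sigma_{\min}-|Y|\log\sigma_{\max}$, which can be negative for moderate $|Y|$, so the ratio defining $\gamma$ cannot be bounded away from zero this way. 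The same problem breaks your monotonicity argument: knowing that the Schur complement lies in $\PM$ only gives $\det(\L_{Y\cup\{i\}})/\det(\L_Y)\ge 0$, not $\ge 1$.

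What you are missing is that, although Cauchy eigenvalue interlacing fails for nonsymmetric $\L$, \emph{singular-value} interlacing for principal submatrices does hold (Thompson, 1972): if $\sigma'_j$ and $\sigma_j$ are the singular values of $\L_{Y\cup\{i\}}$ and $\L_Y$, then $\sigma'_j\ge\sigma_j$ for $j\le|Y|$ and $\sigma_j\ge\sigma'_{j+2}$. Combined with $\det(\L_Y)=\prod_j\sigma_j(\L_Y)$ for $\L\in P_0$, this yields the pointwise marginal-gain sandwich
\[
\log\sigma_{\min}\;\le\; f(Y\cup\{i\})-f(Y)\;\le\; 2\log\sigma_{\max}-\log\sigma_{\min},
\]
valid for every $|Y|<2k$. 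These bounds give monotonicity directly and, when fed into the defining ratios for $\gamma$ and $\alpha$, produce $\gamma\ge 1/(2r-1)$ and $\alpha\le 2(r-1)/(2r-1)$ with $r=\log\sigma_{\max}/\log\sigma_{\min}$; plugging into $(1/\alpha)(1-e^{-\alpha\gamma})$ and using the concavity estimate $1-e^{-x}\ge 4(1-e^{-1/4})x$ on $[0,1/4]$ then delivers the stated constant. So the fix is to replace your function-value envelope by Thompson's interlacing to get uniform marginal-gain bounds.
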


Thus, when the kernel has a small value of $\log \sigma_{\max}/\log \sigma_{\min}$, the greedy algorithm finds a near-optimal solution. In practice, we observe that the greedy algorithm finds a near-optimal 
solution even for large values of this ratio (see \cref{sec:mapinference}).  As remarked above, there is no evidence that the condition  $\sigma_{\min} > 1$ is usually true in practice.  While this condition can be achieved by multiplying $\L$ by a constant, this leads to a (potentially large) additive term in \cref{eq:greedyratio}. We provide \cref{cor:greedy} in \cref{sec:corr-greedy}, which excludes the $\sigma_{\min} > 1$ assumption, and quantifies this additive term.

% \paragraph{Remark.} 
% We note that when $\L$ is symmetric the approximation ratio depends on the ratio 
% of logarithm of the largest and smallest singular values of $L$ (i.e., 
% $\kappa \geq 1$). The best approximation ratio is about $0.8847$ when 
% $\sigma_{\max} = \sigma_{\min}$. This is tighter than 
% that of the greedy submodular maximization, $(1-1/e)\approx 0.632$.

\subsection{Greedy conditioning for next-item prediction}
\label{subsec:greedy-conditioning}

We briefly describe here a small modification to the greedy algorithm that is necessary if one wants to use it as a tool for next-item prediction.  Given a set $Y \subseteq [\![M]\!]$, \cite{kulesza2012determinantal} showed that a DPP with $\L$ conditioned on the inclusion of the items in $Y$ forms another DPP with kernel $\L^Y := \L_{\bar{Y}} - \L_{\bar{Y}, Y} \L_Y^{-1} \L_{\bar{Y}, Y}$ where $\bar{Y} = [\![M]\!] \setminus  Y$.  The singleton probability $\Pr(Y \cup \{i\} \mid Y) \propto \L^Y_{ii}$ can be useful for doing next-item prediction.  We can use the same machinery from the greedy algorithm's marginal gain computations to effectively compute these singletons.  More concretely, suppose that we are doing next-item prediction as a shopper adds items to a digital cart.  We predict the item that maximizes the marginal gain, conditioned on the current cart contents (the set $Y$).  When the shopper adds the next item to the cart, we update $Y$ to include this item, rather than our predicted item (line 10 in  \cref{alg:greedylowrank}).  We then iterate until the shopper checks out.  The comments on the righthand side of \cref{alg:greedylowrank} summarize this procedure.  The runtime of this prediction is the same that of the greedy algorithm, $O(MK^2 + MK|Y|)$.  We note that this cost is comparable to that of an approach based on the DPP dual kernel from prior work~\citep{mariet2019negdpp}, which has $O(MK^2 + K^3 + |Y|^3)$ complexity. However, since it is non-trivial to define the dual kernel for NDPPs, the greedy algorithm may be the simpler choice for next-item prediction for NDPPs.
\section{Experiments}

To further simplify learning and MAP inference, we set $\B = \V$, which results in $\L = \V\V^\top + \V\C\V^\top = \V (\I + \C) \V^\top$.  This change also simplifies regularization, so that we only perform regularization on $\V$, as indicated in the first term of \cref{eq:regularization}, leaving us with the single regularization hyperparameter of $\alpha$.  While setting $\B = \V$ restricts the class of nonsymmetric $\L$ kernels that can be represented, we compensate for this restriction by relaxing the block-diagonal structure imposed on $\C$, so that we learn a full skew-symmetric $K \times K$ matrix $\C$.  To ensure that $\C$ and thus $\A$ is skew-symmetric, we parameterize $\C$ by setting $\C = \D - \D^T$, were $\D$ varies over $\mathbb{R}^{K \times K}$.

Code for all experiments is available at \\ \url{https://github.com/cgartrel/scalable-nonsymmetric-DPPs}.

\subsection{Datasets}
\label{subsec:experimental-datasets}

We perform experiments on several real-world public datasets composed of subsets:
\begin{enumerate}
    \item \textbf{Amazon Baby Registries:} This dataset consists of registries or "baskets" of baby products, and has been used in prior work on DPP learning~\citep{gartrell2016bayesian, gartrell2019nonsym, gillenwater14, mariet15}.  The registries contain items from 15 different categories, such as ``apparel'', with a catalog of up to 100 items per category. Our evaluation mirrors that of~\cite{gartrell2019nonsym}; we evaluate on the popular apparel category, which contains 14{,}970 registries, as well as on a dataset composed of the three most popular categories: apparel, diaper, and feeding, which contains a total of 31{,}218 registries. 
    
    \item \textbf{UK Retail:} This dataset~\citep{chen2012data} contains baskets representing transactions from an online retail company that sells unique all-occasion gifts.  We omit baskets with more than 100 items, leaving us with a dataset containing $19{,}762$ baskets drawn from a catalog of $M = 3{,}941$ products. Baskets containing more than 100 items are in the long tail of the basket-size distribution of the data, so omitting larger baskets is reasonable, and allows us to use a low-rank factorization of the DPP with $K = 100$.
    
    \item \textbf{Instacart:} This dataset~\citep{instacart2017dataset} contains baskets purchased by Instacart users.  We omit baskets with more than 100 items, resulting in 3.2 million baskets and a catalog of $49{,}677$ products.
    
    \item \textbf{Million Song:} This dataset~\citep{mcfee2012million} contains playlists (``baskets'') of songs played by Echo Nest users. We trim playlists with more than 150 items, leaving us with $968{,}674$ baskets and a catalog of $371{,}410$ songs.
\end{enumerate}

\subsection{Experimental setup and metrics}
\label{subsec:experimental-setup}

We use a small held-out validation set, consisting of 300 randomly-selected baskets, for tracking convergence during training and for tuning hyperparameters.  A random selection of 2000 of the remaining baskets are used for testing, and the rest are used for training.  Convergence is reached during training when the relative change in validation log-likelihood is below a predetermined threshold.  We use PyTorch with Adam~\citep{kingma2015adam} for optimization.  We initialize $\C$ from the standard Gaussian distribution with mean 0 and variance 1, and $\B$ (which we set equal to $\V$) is initialized from the $\text{uniform}(0, 1)$ distribution.

\textbf{Subset expansion task.}  We use greedy conditioning to do next-item prediction, as described in \cref{subsec:greedy-conditioning}.  We compare methods using a standard recommender system metric: mean percentile rank (MPR)~\citep{hu08,li10}. MPR of 50 is equivalent to random selection; MPR of 100 means that the model perfectly predicts the next item.  See \cref{sec:MPR} for a complete description of the MPR metric.

\textbf{Subset discrimination task.}  We also test the ability of a model to discriminate observed subsets from randomly generated ones. For each subset in the test set, we generate a subset of the same length by drawing items uniformly at random (and we ensure that the same item is not drawn more than once for a subset). We compute the AUC for the model on these observed and random subsets, where the score for each subset is the log-likelihood that the model assigns to the subset.

\begin{table}[t]
  \caption{Average MPR, AUC, and test log-likelihood for all datasets, for the low-rank symmetric DPP~\citep{gartrell17}, low-rank NDPP~\citep{gartrell2019nonsym}, and our scalable NDPP models.  MPR and AUC results show 95\% confidence estimates obtained via bootstrapping. Bold values indicate improvement over the symmetric low-rank DPP outside of the confidence interval.  See \cref{sec:hyperparams} for the hyperparameter settings used in these experiments.  The baseline NDPP model cannot be feasibly trained on the Instacart and Million Song datasets, as memory and computational costs are prohibitive due to large ground set sizes.}
  % \vspace{-0.2cm}
  \centering
  \scalebox{0.78}{
    \begin{tabular}{ccccccc}
      \multicolumn{4}{c}{\thead{Amazon: Apparel ($M = 100$)}} &
      \multicolumn{3}{c}{\thead{Amazon: 3-category ($M = 300$)}} \\
      Metric & Sym & Nonsym & Scalable nonsym & Sym & Nonsym & Scalable nonsym \\
      \hline
      MPR & 62.63 $\pm$ 1.81 & \bf 72.20 $\pm$ 3.07 & \bf 69.02 $\pm$ 2.57 & 61.0 $\pm$ 2.73 & \bf 74.10 $\pm$ 2.49 & \bf 73.04 $\pm$ 2.58 \\
      AUC & 0.68 $\pm$ 0.05 & \bf 0.77 $\pm$ 0.03 & 0.74 $\pm$ 0.03 & 0.76 $\pm$ 0.03 & \bf 0.82 $\pm$ 0.02 & \bf 0.82 $\pm$ 0.02 \\
      test log-likelihood & -10.02 & \bf -9.64 & \bf -9.63 & -18.11 & \bf -16.96 & \bf -17.14 \\ 
      \hline
    \end{tabular}
    }
    
    \vspace{0.2cm}
    
  \scalebox{0.78}{
    \begin{tabular}{ccccccc}
        \multicolumn{4}{c}{\thead{UK Retail ($M = 3{,}941$)}} &
        \multicolumn{3}{c}{\thead{Instacart ($M = 49{,}677$)}} \\
        Metric & Sym & Nonsym & Scalable nonsym & Sym & Nonsym & Scalable nonsym \\
        \hline
        MPR & 69.95 $\pm$ 1.32 & \bf 74.17 $\pm$ 1.37 & \bf 76.79 $\pm$ 1.17 & 93.86 $\pm$ 0.55 & - & 93.13 $\pm$ 0.53 \\
        AUC & 0.58 $\pm$ 0.01 & \bf 0.66 $\pm$ 0.01 & \bf 0.73 $\pm$ 0.01 & 0.83 $\pm$ 0.01 & - & \bf 0.85 $\pm$ 0.005 \\
        test log-likelihood & -116.23 & \bf -104.38 & \bf -100.65 & -72.81 & - &  \bf -72.74 \\
        \hline
    \end{tabular}
  }
  
  \vspace{0.2cm}
  
  \scalebox{0.78}{
    \begin{tabular}{ccccccc}
        \multicolumn{4}{c}{\thead{Million Song ($M = 371{,}410$)}} & \\
        Metric & Sym & Nonsym & Scalable nonsym\\
        \hline
        MPR & 90.37 $\pm$ 0.71 & - & 90.41 $\pm$ 0.75\\
        AUC &  0.69 $\pm$ 0.01 & - & \bf 0.77 $\pm$ 0.01 \\
        test log-likelihood & -335.25 & - &  \bf -317.16 \\
        \hline
    \end{tabular}
  }
  \label{tab:predictive-qual}
  %\vspace{-0.3cm}
\end{table}

\subsection{Predictive performance results for learning}

Since the focus of our work is on improving NDPP scalability, we use the low-rank symmetric DPP~\citep{gartrell17} and the low-rank NDPP of prior work~\citep{gartrell2019nonsym} as baselines for our experiments. \cref{tab:predictive-qual} compares these approaches and our scalable low-rank NDPP.  We see that NDPPs generally outperform symmetric DPPs.  Furthermore, we see that our scalable NDPP matches or exceeds the predictive quality of the baseline NDPP.  We believe that our model sometimes improves upon this baseline NDPP due to the use of a simpler kernel decomposition with fewer parameters, likely leading to a simplified optimization landscape.

\subsection{Time comparison for learning}
\label{sec:training-time}

In \cref{fig:training-time}, we report the wall-clock training time of the decomposition of \cite{gartrell2019nonsym} (NDPP) and our scalable NDPP for the Amazon: 3-category (\cref{fig:training-time:amazonthree}) and UK Retail (\cref{fig:training-time:ukretail}) datasets.  As expected, we observe that the scalable NDPP trains far faster than the NDPP for datasets with large ground sets.  For the Amazon: 3-category dataset, both approaches show comparable results, with the scalable NDPP converging $1.07 \times$ faster than NDPP. But for the UK Retail dataset, which has a much larger ground set, our scalable NDPP achieves convergence about $8.31 \times$ faster.  Notice that our scalable NDPP also opens to the door to training on datasets with large $M$, such as the Instacart and Million Song dataset, which is infeasible for the baseline NDPP due to high memory and compute costs. For example, NDPP learning using~\citet{gartrell2019nonsym} for the Million Song dataset would require approximately 1.1 TB of memory, while using our scalable NDPP approach requires approximately 445.9 MB. 

%We do not have a timing comparison for the Instacart dataset because the model with the decomposition of \cite{gartrell2019nonsym} cannot be trained on this dataset due to prohibitive memory and computational costs.
%For example, the per-iteration gradient update of scalable NDPP is $8 \times$ faster than that of the decomposition of \cite{gartrell2019nonsym} on the UK dataset. See \cref{sec:training-time} for a comparison of overall training times.    

\begin{figure}[t]
\centering
\subfigure[{\bf Amazon: 3-category}]{
  \includegraphics[width=0.3\textwidth]{./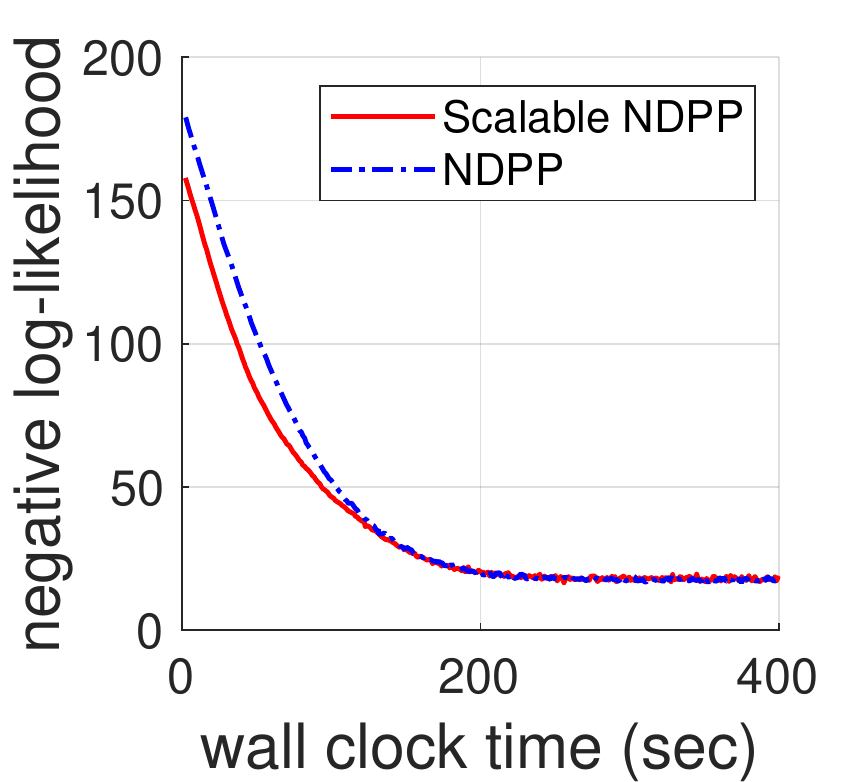}\label{fig:training-time:amazonthree}
} 
\hspace{0.1in}
\subfigure[{\bf UK Retail}]{
  \includegraphics[width=0.3\textwidth]{./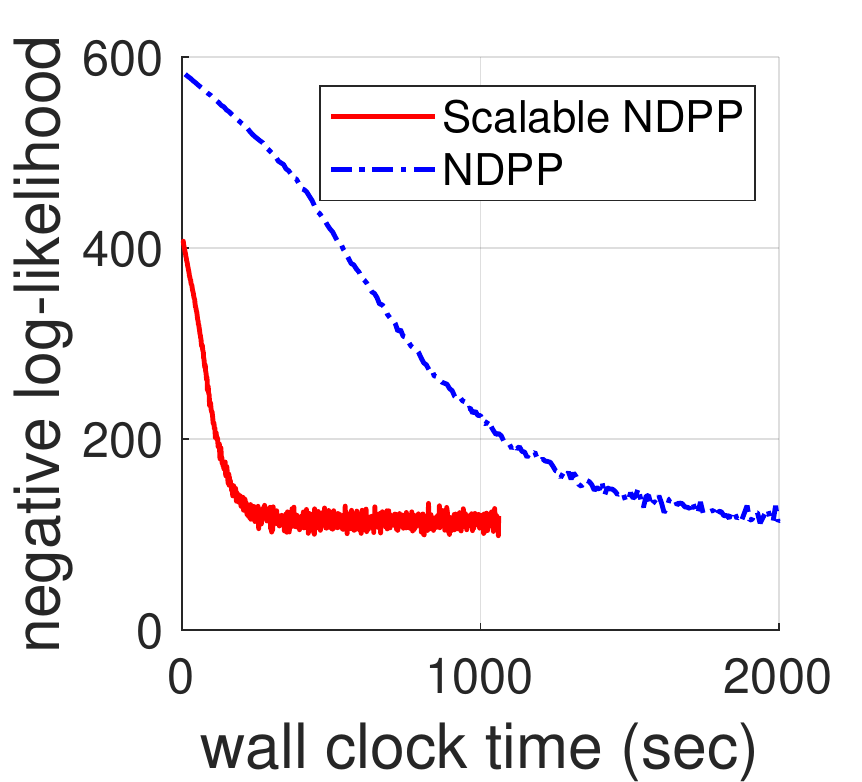}\label{fig:training-time:ukretail}
} 
\caption{The negative log-likelihood of the training set for \citet{gartrell2019nonsym}'s NDPP (blue, dashed) and our scalable NDPP (red, solid) versus wall-clock time for the (a) Amazon: 3-category and (b) UK Retail datasets.}
\label{fig:training-time}
\end{figure}

\subsection{Performance results for MAP inference} \label{sec:mapinference}

We run various approximatation algorithms for MAP inference, including the greedy algorithm (\cref{alg:greedylowrank}), stochastic greedy algorithm~\citep{mirzasoleiman2015lazier}, MCMC-based DPP sampling~\citep{li2016fast}, and greedy local search~\citep{kathuria2016sampling}.
The stochastic greedy algorithm computes marginal gains of a few items chosen uniformly at random and selects the best among them. 
The MCMC sampling begins with a random subset $Y$ of size $k$ and picks $i \in Y$ and
$j \notin Y$ uniformly at random. Then, it swaps them with probability 
$\det(\L_{Y \cup \{j\} \setminus \{i\}}) / (\det(\L_{Y \cup \{j\} \setminus \{i\}}) + \det(\L_Y))$ and iterates this process. The greedy local search algorithm~\citep{kathuria2016sampling} starts from the output from the greedy algorithm, $Y^G$, and replaces $i \in Y^G$ with $j \notin Y^G$
that gives the maximum improvement, if such $i,j$ exist. 
This replacement process iterates until no improvement exists, or at most $k^2 \log (10 k)$ steps have been completed, to guarantee a tight approximation~\citep{kathuria2016sampling}.  We use greedy local search as a baseline since it always returns a better solution than greedy. However, it is the slowest among all algorithms, as its time complexity is $O(M K k^4 \log k)$.  We choose $k=10$, and provide more details of all algorithms in \cref{sec:benchmark}.

To evaluate the performance of MAP inference, we report the relative log-determinant ratio defined as
\[
\abs{\frac{\log \det (\L_{Y^*}) - \log \det (\L_{Y})}{\log \det (\L_{Y^*})}}
\]
where $Y$ is the output of benchmark algorithms and $Y^*$ is the greedy local search result. Results are reported in \cref{tab:mapinference}. We observe that the greedy (\cref{alg:greedylowrank}) achieves performance close to that of the significantly more expensive greedy local search algorithm, with relative errors of up to $0.045$.  Stochastic greedy and MCMC sampling have significantly larger errors.

For completeness, in \cref{sec:synthetic-greedy-exp} we also present experiments comparing the performance of greedy and exact MAP on small synthetic NDPPs, for which the exact MAP can be feasibly computed.

\begin{table}[t]
\caption{Average relative error and 95\% confidence intervals of MAP inference algorithms on NDPPs learned from real-world datasets. For all datasets, we evaluate $10$ kernels learned with different initializations, and run $100$ random trials for stochastic greedy~\citep{mirzasoleiman2015lazier} and MCMC sampling~\citep{li2016fast}.  All errors are relative to greedy local search~\citep{kathuria2016sampling}.}
\label{tab:mapinference}
% \vspace{-0.07in}
\centering
\scalebox{0.75}{
\setlength{\tabcolsep}{5pt}
\def\arraystretch{1.3}
\begin{tabular}{cccccc}
Algorithms & \thead{Amazon: Apparel} & \thead{Amazon: 3-category} & \thead{UK 
Retail} & \thead{Instacart} & \thead{Million Song} \\ \hline
{Greedy (\cref{alg:greedylowrank})} & {\bf 0.0336 $\pm$ 0.0066} & {\bf 0.0093 $\pm$ 0.0015} & {\bf 0.0446 $\pm$ 0.0035} & {\bf 0.0173 $\pm$ 0.0028} & {\bf 0.0052 $\pm$ 0.0017}\\
{Stochastic greedy} & {0.1606 $\pm$ 0.0133} & 
{0.1838 $\pm$ 0.0116} & {0.0960 $\pm$ 0.0078} & {0.1229 $\pm$ 0.0091} & {0.0823 $\pm$ 0.0108}\\
{MCMC sampling}& {0.7155 $\pm$ 0.0287} & {0.7094 $\pm$ 0.0207} 
& {0.9365 $\pm$ 0.0342} & {1.9291 $\pm$ 0.047} & {1.0493 $\pm$ 0.0607}\\
\hline
\end{tabular}
}
%\vspace{-0.3cm}
\end{table}

\subsection{Time comparison for MAP inference} \label{sec:map-inference-time}

We provide the wall-clock time of the above algorithms for real-world datasets in \cref{tab:mapinferencetime}. 
Observe that the greedy algorithm is the fastest method for all datasets except Million Song. For Million Song, MCMC sampling is faster than other approaches, but it has much larger relative errors in terms of log-determinant (see \cref{tab:mapinference}), which is not suitable for our purposes.

\begin{table}[t]
\caption{Wall-clock time (in milliseconds) of MAP inference algorithms on NDPPs learned from real-world datasets.}
\label{tab:mapinferencetime}
% \vspace{-0.07in}
\centering
\scalebox{0.75}{
\setlength{\tabcolsep}{10pt}
\def\arraystretch{1.3}
\begin{tabular}{cccccc}
Algorithms & \thead{Amazon: Apparel}& \thead{Amazon: 3-category} & \thead{UK Retail} & \thead{Instacart} & \thead{Million Song} \\ \hline
Greedy local search & 5.78 ms & 9.67 ms & 58.74 ms & 1.024 s & 7.277 s\\
Greedy (\cref{alg:greedylowrank}) & \bf 0.14 ms & \bf 0.34 ms & \bf 1.60 ms & \bf 36.16 ms & 338.09 ms \\ 
Stochastic greedy & 0.25 ms & 0.47 ms & 1.79 ms & 36.94 ms & 348.67 ms\\
MCMC sampling & 0.19 ms & 0.35 ms & 2.85 ms & 42.85 ms & \bf 303.20 ms\\
\hline
\end{tabular}
}
\end{table}
\section{Conclusion}

We have presented a new decomposition for nonsymmetric DPP kernels that can be learned in time linear in the size of the ground set, which is a significant improvement over the complexity of prior work.  Empirical results indicate that this decomposition matches the predictive performance of the prior decomposition.  We have also derived the first MAP algorithm for nonsymmetric DPPs and proved a lower bound on the quality of its approximation.  In future work we hope to develop intuition about the meaning of the parameters in the $\C$ matrix and consider kernel decompositions that cover other parts of the nonsymmetric $P_0$ space.

\clearpage

\bibliographystyle{iclr2021_conference}
% \setcitestyle{numbers}
\bibliography{bibliography}

\clearpage
\appendix

\section{Mean Percentile Rank}
\label{sec:MPR}
We begin our definition of MPR by defining percentile rank (PR).  First,
given a set $J$, let $p_{i,J}=\Pr(J\cup\{i\} \mid J)$. The percentile rank of an
item $i$ given a set $J$ is defined as
\[\text{PR}_{i,J} = \frac {\sum_{i' \not\in J} \mathds 1(p_{i,J} \ge p_{i',J})}
{|\Ycal \wo J|} \times 100\%\]
where $\Ycal \wo J$ indicates those elements in the ground set $\Ycal$ that are 
not found in $J$.

For our evaluation, given a test set $Y$, we select a random element $i \in Y$ and compute $\text{PR}_{i,Y\wo \{i\}}$.  We then average over the set of all test instances $\Tcal$ to compute the mean percentile rank (MPR):
\[\text{MPR}=\frac 1 {|\Tcal|} \sum_{Y \in \mathcal
T}\text{PR}_{i,Y\wo \{i\}}.\]

\section{Hyperparameters for experiments in \cref{tab:predictive-qual}}
\label{sec:hyperparams}

\textbf{Preventing numerical instabilities}:
The first term on the right side of \cref{eq:nonsymm-full-log-likelihood} will be singular whenever $|Y_i| > K$, where $Y_i$ is an observed subset.  Therefore, to address this in practice we set $K$ to the size of the largest subset observed in the data, $K'$, as in \citet{gartrell17}.  However, this does not entirely address the issue, as the first term on the right side of \cref{eq:nonsymm-full-log-likelihood} may still be singular even when $|Y_i| \leq K$.  In this case though, we know that we are not at a maximum, since the value of the objective function is $-\infty$.  Numerically, to prevent such singularities, in our implementation we add a small $\epsilon \I$ correction to each $\L_{Y_i}$ when optimizing \cref{eq:nonsymm-full-log-likelihood} (we set $\epsilon = 10^{-5}$ in our experiments).

We perform a grid search using a held-out validation set to select the best performing hyperparameters for each model and dataset.  The hyperparameter settings used for each model and dataset are described below.

\textbf{Symmetric low-rank DPP} \citep{gartrell2016bayesian}.  For this model, we use $K$ for the number of item feature dimensions for the symmetric component $\V$, and $\alpha$ for the regularization hyperparameter for $\V$.  We use the following hyperparameter settings:
\begin{itemize}
    \item Both Amazon datasets: $K=30, \alpha=0$.
    \item UK Retail dataset: $K=100, \alpha=1$. 
    \item Instacart dataset: $K=100, \alpha=0.001$.
    \item Million Song dataset: $K=150, \alpha=0.0001$.
\end{itemize}

\textbf{Baseline NDPP} \citep{gartrell2019nonsym}.  For this model, to ensure consistency with the notation used in~\citet{gartrell2019nonsym}, we use $D$ to denote the number of item feature dimensions for the symmetric component $\V$, and $D'$ to denote the number of item feature dimensions for the nonsymmetric components, $\B$ and $\C$.  As described in~\cite{gartrell2019nonsym}, $\alpha$ is the regularization hyperparameter for the $\V$, while $\beta$ and $\gamma$ are the regularization hyperparameters for $\B$ and $\C$, respectively. We use the following hyperparameter settings:
\begin{itemize}
    \item Both Amazon datasets: $D=30, \alpha=0$.
    \item Amazon apparel dataset: $D'=30$.
    \item Amazon three-category dataset: $D'=100$.
    \item UK Retail dataset: $D=100, D'=20, \alpha=1$.
    \item All datasets: $\beta = \gamma = 0$.
\end{itemize}

\textbf{Scalable NDPP}.  As described in \cref{sec:model}, we use $K$ to denote the number of item feature dimensions for the symmetric component $\V$ and the dimensionality of the nonsymmetric component $\C$.  $\alpha$ is the regularization hyperparameter.  We use the following hyperparameter settings:
\begin{itemize}
    \item Amazon apparel dataset: $K=30, \alpha=0$.
    \item Amazon three-category dataset: $K=100, \alpha=1$.
    \item UK dataset: $K=100, \alpha=0.01$.
    \item Instacart dataset: $K=100, \alpha=0.001$.
    \item Million Song dataset: $K=150, \alpha=0.01$.
\end{itemize}

For all of the above model configurations we use a batch size of 200 during training, except for the scalable NDPPs trained on the Amazon apparel, Amazon three-category, Instacart, and Million Song datasets, where a batch size of 800 is used.

\section{Benchmark algorithms for MAP inference}
\label{sec:benchmark}
We test the following approximate algorithms for MAP inference:

\paragraph{Greedy local search.} This algorithm starts from the output of greedy, $Y^G$, and replaces $i \in Y^G$ with $j \notin Y^G$ that gives the maximum improvement of the determinant, if such $i,j$ exist. \citet{kathuria2016sampling} showed that running the search for such a swap $O(k^2 \log (k/\varepsilon))$ times with an accuracy parameter $\varepsilon$ gives a tight approximation guarantee for MAP inference for symmetric DPPs. We set the number of swaps to $\lfloor k^2 \log(10 k) \rfloor$ for $\varepsilon=0.1$ and use greedy local search as a baseline, since it is strictly an improvement on the greedy solution.  The proposed greedy conditioning can be used for fast greedy local search. Specifically, for each $i \in Y^G$, \cref{alg:greedylowrank} can compute marginal improvements conditioned by $Y^G \setminus \{i\}$ in time $O(MKk)$, and thus its runtime can be $O(M K k^4\log (k/\varepsilon))$. However, it is the slowest among all of our benchmark algorithms.

\paragraph{Stochastic greedy.} This algorithm computes marginal gains of a few items chosen uniformly at random and selects the best among them. \citet{mirzasoleiman2015lazier} proved that $(M/k)\log(1/\varepsilon)$ samples are enough to guarantee an $(1-1/e-\varepsilon)$-approximation ratio for submodular functions (i.e., symmetric DPPs). We choose $\varepsilon= 0.1$ and set the number of samples to $\lfloor (M/k) \log(10)\rfloor$. Under this setting, the time complexity of stochastic greedy is $O(MKk^2 \log(1/\varepsilon))$, which is better than the na\"ive exact greedy algorithm. However, we note that it is worse than that of our efficient greedy implement (\cref{alg:greedylowrank}).  This is because the stochastic greedy uses different random samples for every iteration and this does not take advantage of the amortized computations in \cref{lmm:bilinear-inverse}. In our experiments, we simply modify line 10 in \cref{alg:greedylowrank} for stochastic greedy ($\argmax$ is operated on a random subset of marginal gains), hence it can run in $O(MKk + (M/k) \log(1/\varepsilon))$ time. In practice, we observe that stochastic greedy is slightly slower than exact greedy due to the additional costs of the random sampling process. 

\paragraph{MCMC sampling.} We also compare inference algorithms with sampling from a nonsymmetric DPP. To the best of our knowledge, exact sampling of a non-Hermitian DPP was studied in~\citet{poulson2019high}, which requires the Cholesky decomposition with $O(M^3)$ complexity. This is infeasible for a large $M$. To resolve this, Markov Chain Monte-Carlo (MCMC) based sampling is preferred~\citep{li2016fast} for symmetric DPPs. In particular, we consider a Gibbs sampling for $k$-DPP, which begins with a random subset $Y$ with size $k$, and picks $i \in Y$ and $j \notin Y$ uniformly at random. Then, it swaps them with probability 
\begin{align}
\frac{\det(\L_{Y \cup \{j\} \setminus \{ i\}})}{\det(\L_{Y \cup \{j\} \setminus \{ i\}}) + \det(\L_Y)}
% \min\left(1, \frac{\det(\L_{Y \cup \{j\} \setminus \{ i\}})}{\det(\L_Y)}\right)
\end{align}
and repeat this process for several steps. \citet{li2016fast} showed that $O(Nk \log(k/\varepsilon))$ swaps are enough to approximate the ground-truth distribution under symmetric DPPs. However, for a fair runtime comparison to \cref{alg:greedylowrank}, we set the number of swaps to $\lfloor 3 N/K \rfloor$.

\section{Corollary of \cref{thm:greedy}}
\label{sec:corr-greedy}

\cref{thm:greedy} requires the technical condition  $\sigma_{\min} > 1$, but in practice there is no particular evidence that this condition holds.  While this condition can be achieved by multiplying $\L$ by a constant, this leads to a (potentially large) additive term in \cref{eq:greedyratio}.  Here, we provide \cref{cor:greedy} which excludes the $\sigma_{\min} > 1$ assumption from \cref{thm:greedy}, and quantifies this additive term.

\begin{restatable}{corr}{corrgreedy}
\label{cor:greedy}
Consider a nonsymmetric low-rank DPP $\L = \V\V^\top + \B\C\B^\top$, where $\V,\B$ are of rank $K$, and $\C \in \mathbb{R}^{K \times K}$.  Given a cardinality budget $k$, let $\sigma_{\min}$ and $\sigma_{\max}$ denote the smallest and largest singular 
values of $\L_Y$ for all $Y\subseteq[\![M]\!]$ and $|Y|\leq 2k$.
Let $\kappa := \sigma_{\max} / \sigma_{\min}$.  Then,
\begin{align}
\log \det(\L_{Y^G}) 
&\geq \frac{4(1-e^{-1/4})}{2 \log \kappa + 1} \log \det(\L_{Y^*})
- \left(1 - \frac{4(1-e^{-1/4})}{2 \log \kappa + 1}\right) k \left(1 - \log 
\sigma_{\min} \right)
\end{align}
where $Y^G$ is the output of \cref{alg:greedylowrank} and 
$Y^*$ is the optimal solution of MAP inference in \cref{eq:mapinference}.
\end{restatable}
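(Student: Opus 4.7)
The plan is to derive Corollary 1 from Theorem 1 by a simple rescaling argument, which trades the assumption $\sigma_{\min} > 1$ for an additive penalty that captures how far $\sigma_{\min}$ is from satisfying that bound. The key observation is that multiplying $\L$ by a positive constant $c$ rescales every principal submatrix as $c\L_Y$, which (i) rescales every singular value by $c$, (ii) leaves the greedy algorithm's choices and hence $Y^G$ invariant (since $\arg\max$ is scale-invariant), and (iii) also leaves the optimal MAP set $Y^*$ invariant. Thus Theorem 1 applied to $c\L$ yields a guarantee we can translate back to $\L$.

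Concretely, I would pick $c = e/\sigma_{\min}$ so that the rescaled minimum singular value becomes $c\sigma_{\min} = e > 1$, which is the cleanest choice that satisfies the hypothesis of Theorem 1 and also makes $\log(c\sigma_{\min}) = 1$. The corresponding rescaled maximum singular value is $c\sigma_{\max} = e\kappa$, giving $\log(c\sigma_{\max}) = 1 + \log \kappa$. Applying Theorem 1 to $c\L$ and letting $\rho := \frac{4(1-e^{-1/4})}{2\log\kappa + 1}$ yields
\begin{equation*}
\log\det(c\L_{Y^G}) \;\geq\; \rho \,\log\det(c\L_{Y^*}),
\end{equation*}
since the denominator becomes $2(1+\log\kappa) - 1 = 2\log\kappa + 1$ after substitution.

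Next, since $|Y^G| = |Y^*| = k$, the identity $\det(c\L_Y) = c^k \det(\L_Y)$ gives $\log\det(c\L_Y) = k\log c + \log\det(\L_Y)$ for both $Y = Y^G$ and $Y = Y^*$. Substituting and rearranging yields
\begin{equation*}
\log\det(\L_{Y^G}) \;\geq\; \rho\,\log\det(\L_{Y^*}) \;-\; (1-\rho)\,k\log c,
\end{equation*}
and plugging in $\log c = 1 - \log\sigma_{\min}$ recovers exactly the bound claimed in Corollary 1.

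No real obstacle lies in this argument; it is entirely routine once the scaling observation is in place. The main thing to verify carefully is that the hypotheses of Theorem 1 transfer correctly under scaling: namely, that the singular value extrema over all principal submatrices of order at most $2k$ scale uniformly by $c$, which follows because each $(c\L)_Y = c\L_Y$ and singular values are absolutely homogeneous. The only mild subtlety is whether the greedy output is genuinely invariant under rescaling in Algorithm 1 — this holds because every marginal gain $\det(\L_{Y\cup\{i\}})/\det(\L_Y)$ scales by the same factor $c$ at each step, so the $\arg\max$ selections are identical along the whole run.
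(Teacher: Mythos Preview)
Your proposal is correct and follows essentially the same approach as the paper: both rescale $\L$ by $c = e/\sigma_{\min}$ so that the new minimum singular value equals $e>1$, apply Theorem~\ref{thm:greedy} to the rescaled kernel, and then translate back using $\log\det(c\L_Y) = |Y|\log c + \log\det(\L_Y)$ with $\log c = 1-\log\sigma_{\min}$. If anything, your write-up is more careful than the paper's terse proof, since you explicitly verify that the greedy trajectory $Y^G$ and the optimal set $Y^*$ are invariant under the rescaling.
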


The proof of \cref{cor:greedy} is provided in \cref{sec:proof-corr-greedy}. Note that instead of $\log(\sigma_{\max}) / \log(\sigma_{\min})$, \cref{cor:greedy} has a $\log(\sigma_{\max} / \sigma_{\min})$ term in the denominator.

\section{Performance guarantee for greedy MAP inference}
\label{sec:synthetic-greedy-exp}

% \todo{Jenny: I think it might be good to move this section to the appendix.  (Would give us a little more space, and would help avoid problem of potentially over-emphasizing the theory results.)}

The matrices learned on real datasets are too large to compute the exact MAP solution, but we can compute exact MAP for small matrices.  In this section, we explore the performance of the greedy algorithm studied in \cref{thm:greedy} for $5 \times 5$ synthetic kernel matrices.  More formally, we first pick $K=3$ singular values $s_1, s_2, s_3$ from a kernel learned for the ``Amazon: 3-category'' dataset (a plot of these singular values can be seen in \cref{fig:approxratio:singular}) and generate 
$\L = \V_1 \mathtt{diag}([s_1, s_2, s_3]) \V_2^\top$, where 
$\V_1, \V_2 \in \mathbb{R}^{5\times 3}$ are random orthonormal matrices.
To ensure that $\L$ is a $P_0$ matrix, we repeatedly sample $\V_1, \V_2$ until all principal minors of $\L$ are nonnegative. We also evaluate the performance of the symmetric DPP, where the kernel matrices are generated similarly to the NDPP, except we set $\V_1 = \V_2$. We set $k=3$ and generate $10{,}000$ random kernels for both symmetric DPPs and NDPPs.

The results for symmetric and nonsymmetric DPPs are shown in \cref{fig:approxratio:sym} 
and \cref{fig:approxratio:nonsym}, respectively.  We plot the approximation ratio of \cref{alg:greedylowrank}, i.e., $\log \det (\L_{Y^G}) / \log \det (\L_{Y^*})$, with respect to 
$\log(\sigma_{\max}/ \sigma_{\min})$, from \cref{cor:greedy}. 
We observe that the greedy algorithm for both often shows approximation ratios close to $1$.
However, the worst-case ratio for NDPPs is worse than that of symmetric DPPs; $\log \det(\L_Y)$ for $\L \in \PM$ is non-submodular, and the greedy algorithm with a nonsubmodular function does not have as tight of a worst-case bound as in the symmetric case.

\begin{figure}[t]
\vspace{-0.1in}
\centering
\hspace{0.1in}
\subfigure[Symmetric DPP]{
  \includegraphics[width=0.24\textwidth]{./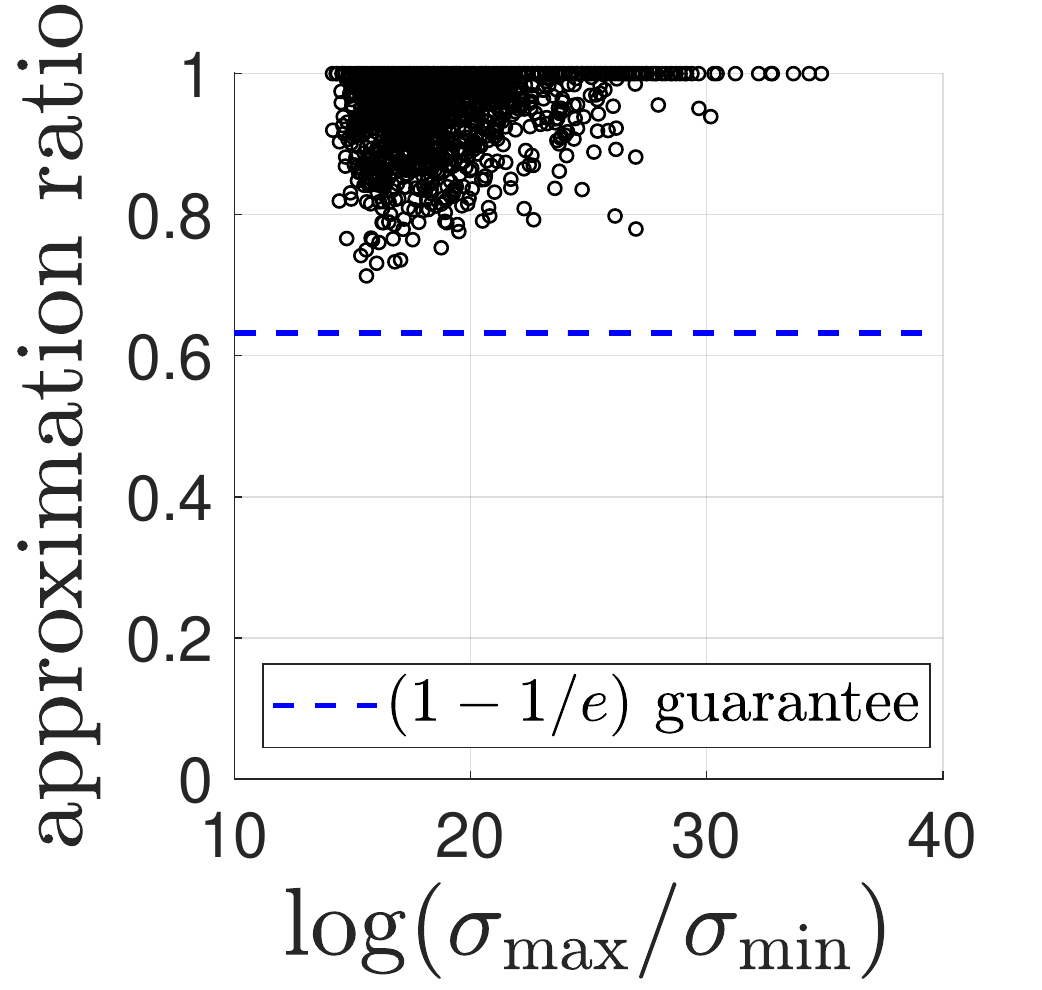}
  \label{fig:approxratio:sym}
}
\hspace{0.1in}
\subfigure[Nonsymmetric DPP]{
  \includegraphics[width=0.24\textwidth]{./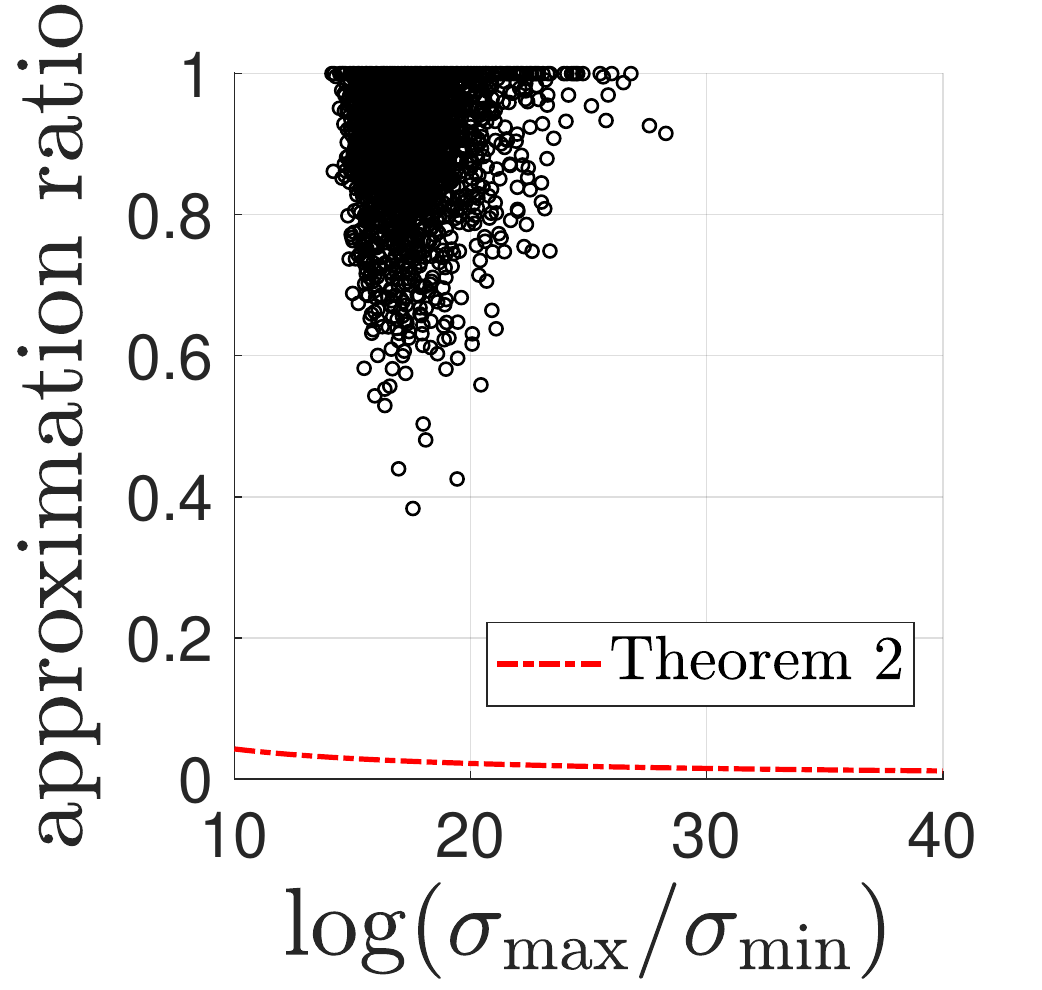}
  \label{fig:approxratio:nonsym}
}
\subfigure[Singular values]{
  \includegraphics[width=0.24\textwidth]{./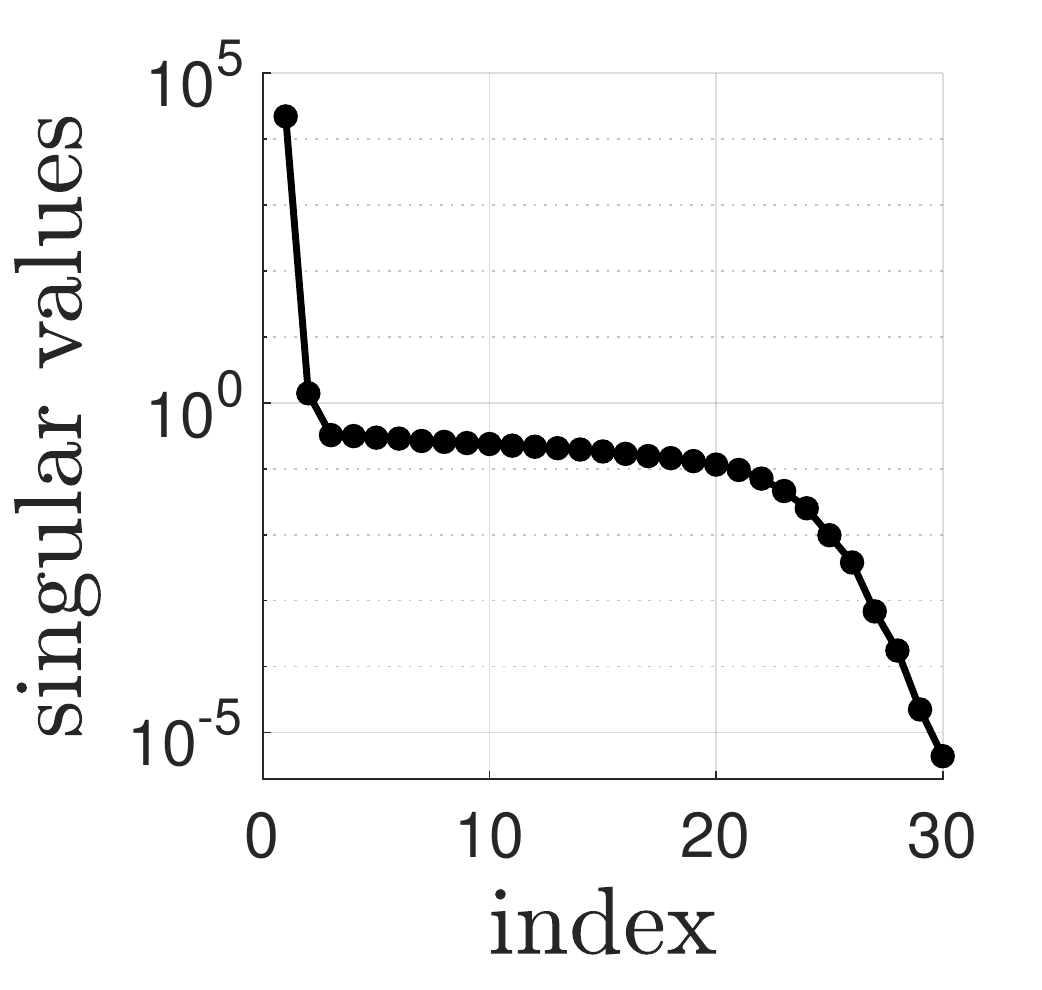}
  \label{fig:approxratio:singular}
}
\caption{Approximation ratios of greedy with respect to different values of $\log(\sigma_{\max} /\sigma_{\min})$ from \cref{cor:greedy} under (a) symmetric DPP and (b) nonsymmetric DPP. (c) The singular values of the kernels learned for the ``Amazon: 3-category'' dataset. We construct $10{,}000$ random $P_0$ matrices $\L \in \mathbb{R}^{5 \times 5}$, with rank $K=3$, whose singular values are from the learned kernels.} %\todo{ Why are we talking about $\log(\sigma_{\max}/\sigma_{\min})$ ? This quantity has nothing to do with the real complexity parameter in our proposed bound, namely $\log(\sigma_{\max})/\log(\sigma_{\min})$} \reply{In Corollary 1, the approximation ratio is proportional to $0.8818/(2\log(\sigma_{\max}/\sigma_{\min}) + 1)$. That's why we compare the ratio v.s. $\log(\sigma_{\max}/\sigma_{\min})$. Does it make senes? } \todo{There was a typo in what I wrote in the todo. Ah, in my memory the original bound had $\log(\sigma_{\max})/\log(\sigma_{\min})$ and not $\log(\sigma_{\max}/\sigma_{\min})$ (the latter is a much better dependence btw). OK, now I see, I didn't notice there was now a Corollary 1 which has dependence on the condition number (standard definition thereof), which is the right quantity to look at anyways :) . [Insu] If this last comment of mine makes sense to you, then I consider the issue solved. feel free to delete the TODO!}}
\label{fig:approxratio}
\vspace{-0.1in}
\end{figure}
\newpage
\section{Proofs} \label{sec:proof}

\subsection{Proof of \cref{lemma:skew-sym-decomp}}

\skewsymdecomp*

\begin{proof}
    First, we note that rank of a nonsingular skew-symmetric matrix is always even, because all of its eigenvalues are purely imaginary and come in conjugate pairs. There exists some orthogonal matrix ${\bf P}\in\Rbb^{M\times M}$ and
    \begin{align}
    \text{\huge ${\bf \Sigma} = $} 
    \begin{pmatrix}
    0 & \lambda_1 & & & & & & & & \\ -\lambda_1 & 0 & & & & & & & & \\ & & 0 & \lambda_2 & & & \text{\huge 0} & & & \\ & & -\lambda_2 & 0 & & & & & & \\ & & & & \ddots & & & & & \\ & & & & & 0 & \lambda_{\ell/2} & & & \\ & & \text{\huge 0} & & & -\lambda_{\ell/2} & 0 & & & \\ & & & & & & & 0 & & \\ & & & & & & & & \ddots & \\ & & & & & & & & & 0
    \end{pmatrix}
    \end{align}
    % {$\displaystyle{\Sigma=\left(\begin{matrix} 0 & \lambda_1 & & & & & & & & \\ -\lambda_1 & 0 & & & & & & & & \\ & & 0 & \lambda_2 & & & \text{\huge 0} & & & \\ & & -\lambda_2 & 0 & & & & & & \\ & & & & \ddots & & & & & \\ & & & & & 0 & \lambda_{\ell/2} & & & \\ & & \text{\huge 0} & & & -\lambda_{\ell/2} & 0 & & & \\ & & & & & & & 0 & & \\ & & & & & & & & \ddots & \\ & & & & & & & & & 0 \end{matrix}\right)}$}
     such that $\A={\bf P \Sigma P}^\top$ (see, e.g.,\cite[Proposition 2.1]{thompson1988normal}).
 %, which is easily extended to the case when $\ell$ is odd).
    
    Let $\C$ be the $\ell\times \ell$ supmatrix of ${\bf{\Sigma}}$ obtained by keeping its first $\ell$ rows and columns and let
    $\displaystyle{{\bf Q}=\left(\begin{matrix} \huge{\I_\ell} \\ \huge{0}\end{matrix}\right)}$, where $\I_\ell$ is the $\ell\times \ell$ identity matrix. Then, ${\bf \Sigma}=\Q\C\Q^\top$ and one can write $\A=\P\Q\C\Q^\top\P^\top$.  Setting $\B = \P\Q$ proves the lemma.
\end{proof}

\subsection{Proof of \cref{theorem:loglik-grad-complexity}}

\loglikgradcomplexity*

% \begin{restatable}{lemma}{loglikecomplexity}
% Given a nonsymmetric low-rank DPP parameterized by $\V$ of rank $K$, $\B$ of rank $K$, and a $K \times K$ matrix $\C$, where $\L = \V \V^\top + \B \C \B^\top$, we can compute the regularized log-likelihood, \cref{eq:nonsymm-full-log-likelihood}, in $O(MK^2 + K^3 + nK'^3)$ time, where $K'$ is the size of the largest of the $n$ training subsets.
% \label{lemma:loglik-complexity}
% \end{restatable}

\begin{proof}
    We first show that the log-likelihood can be computed in time linear in $M$.  Using the matrix determinant lemma, one can easily verify that the DPP normalization term can be computed as
    \begin{align}
        \det(\I + \L) = \det\left(\I + \begin{pmatrix}\V &\B \C \end{pmatrix} \begin{pmatrix}\V^\top \\ \B^\top\end{pmatrix}\right) = \det\left(\I_{2K} + \begin{pmatrix}\V^\top \\ \B^\top\end{pmatrix}\begin{pmatrix}\V &\B \C \end{pmatrix}\right)\label{eq:norm-com-proof}
    \end{align}
    where $\I_{2K}$ is the identity matrix with dimension $2K$. As \cref{eq:norm-com-proof} requires a matrix-multiplication between $(2K)\times M$ matrices and the determinant of $(2K)\times(2K)$ matrices, this allows us to transform a $O(M^3)$ operation into an $O(MK^2 + K^3)$ one.

    Having established that the normalization term in the likelihood can be computed in $O(MK^2 + K^3)$ time, we proceed with characterizing the complexity of the other terms in the likelihood.  The first term in \cref{eq:nonsymm-full-log-likelihood} consists of determinants of size $|Y_i|$.  Assuming that these never exceed size $K'$, each can be computed in at most $O(K'^3)$ time.  The regularization term is a simple sum of norms that can be computed in $O(MK)$ time.  Therefore, the full regularized log-likelihood can be computed in $O(MK^2 + K^3 + nK'^3)$ time.
% \end{proof}

% \subsection{Proof of \cref{theorem:grad-complexity}}
% \begin{restatable}{lemma}{gradcomplexity}
% Given a nonsymmetric low-rank DPP parameterized by $\V$ of rank $K$, $\B$ of rank $K$, and a $K \times K$ matrix $\C$, where $\L = \V \V^\top + \B \C \B^\top$, we can compute the gradient of the regularized log-likelihood in \cref{eq:nonsymm-full-log-likelihood} in $O(MK^2 + K^3 + nK'^3)$ time, where $K'$ is the size of the largest of the $n$ training subsets.
% \label{lemma:grad-complexity}
% \end{restatable}
% \begin{proof}
    To prove that the gradient of the log-likelihood can be computed in time linear in $M$, we begin by showing that the logarithm of DPP normalization term can be factorized as follows:
    \begin{align}
        &Z =\log\det(\I + \L) \\
        &= \log\det\left(\I_{2K} + \begin{pmatrix} \V^\top \\ \B^\top\end{pmatrix}\begin{pmatrix}\V &\B \end{pmatrix} \begin{pmatrix}\I_K & {\bf 0}\\ {\bf 0} & \C \end{pmatrix}\right) \\
        &= \log\det\left( \begin{pmatrix}\I_K & {\bf 0}\\ {\bf 0} & \C^{-1} \end{pmatrix} + \begin{pmatrix} \V^\top \\ \B^\top\end{pmatrix}\begin{pmatrix}\V &\B \end{pmatrix} \right) + \log\det\begin{pmatrix}\I_K & {\bf 0}\\ {\bf 0} & \C \end{pmatrix} \label{eq:norm-com-proof-0} \\
        &= \log\det\begin{pmatrix}\I_K + \V^\top \V & \V^\top \B \\ \B^\top \V & \C^{-1} + \B^\top \B \end{pmatrix} + \log\det(\C) \label{eq:norm-com-proof-1} \\
        &= \log\det\left(\I_K + \V^\top \V\right) + \log\det\left(\C^{-1} + \B^\top (\I - \V (\I_K + \V^\top \V)^{-1} \V^\top) \B\right) + \log\det(\C) 
        \label{eq:norm-com-proof-2}
    \end{align}
    where \cref{eq:norm-com-proof-0} follows from the determinant commutativity (i.e., $\det(\A\B)=\det(\A)\det(\B)$) and \cref{eq:norm-com-proof-1} and \cref{eq:norm-com-proof-2} come from the Schur's determinant identity\footnote{$\det\begin{pmatrix}\A&\B \\ \C&\D\end{pmatrix}=\det(\A)\det(\D - \C \A^{-1} \B)$.}.  For simplicity, we write $\X = \I - \V(\I_K + \V^\top\V)^{-1} \V^\top$ and $(\C^{-1})^{\top} = \C^{-\top}$, and note that $\X$ depends only on $\V$.
    
    The gradient of $Z$ has three parts: $\nabla Z = \begin{pmatrix}\nabla_{\V} Z, \nabla_{\B} Z, \nabla_{\C} Z\end{pmatrix}$ where each can be computed as
    %We derive each below, making use of the standard rule for the gradient of $\log\det$: for any matrix $\A$, $\frac{\partial}{\partial A_{ij}}(\log \det (\A)) = \tr (\A^{-1} \frac{\partial \A}{\partial A_{ij}}) = (\A^{-1})^{\top}_{ij}$.  If we define $\X = \B^\top(\I + \V \V^\top)^{-1}\B$, the gradients are:
    \begin{align}
        \nabla_{\V} Z & = \nabla_{\V} \log\det(\I_K + \V^\top \V) + \nabla_{\V} \log\det(\C^{-1} + \B^\top \X \B) \\
        & = 2 \V (\I_K + \V^\top \V)^{-1} \nonumber \\
        &\qquad - \X \B ((\C^{-1} + \B^\top \X \B)^{-1} + (\C^{-\top} + \B^\top \X \B)^{-1}) \B^\top \X \V\label{eq:norm-grad-com-proof-2}\\
        % 2((\I_k + \V^\top \V)^{-1})^\top \V^\top + ((\C^{-1} + \B^\top (\I + \V \V^\top)^{-1} \B)^{-1})^\top (?) \label{eq:norm-grad-com-proof-2} \\
        \nabla_{\B} Z & = \nabla_{\B} \log\det(\C^{-1} + \B^\top \X \B) \\
        & = \X \B \left((\C^{-1} + \B^\top \X \B)^{-1} + (\C^{-\top} + \B^\top \X \B)^{-1}\right) \label{eq:norm-grad-com-proof-3} \\
        % 2 ((\C^{-1} + \B^\top (\I + \V \V^\top)^{-1} \B)^{-1})^\top \B^\top (\I + \V \V^\top)^{-1} \label{eq:norm-grad-com-proof-3} \\
        \nabla_{\C} Z & = \nabla_{\C} \log\det(\C) +  \nabla_{\C} \log\det(\C^{-1} + \B^\top \X \B) \\
        &= \C^{-\top} - \C^{-\top} (\C^{-1} + \B^\top \X \B)^{-\top} \C^{-\top} \label{eq:norm-grad-com-proof-4}
        % & = (\C^{-1})^\top + ((\C^{-1} + \B^\top (\I + \V \V^\top)^{-1} \B)^{-1})^\top \label{eq:norm-grad-com-proof-4}
    \end{align}
    
    Observe that $\X$ combines a $M \times M$ identity matrix with $M \times K$ matrices, hence multiplying it with a $M \times K$ matrix (e.g., $\X\V$ or $\X\B$) can be computed in $O(MK^2)$ time.  Since each of the remaining matrix inverses in \cref{eq:norm-grad-com-proof-2}, \cref{eq:norm-grad-com-proof-3}, and \cref{eq:norm-grad-com-proof-4} involve a $K \times K$ matrix inverse, with a cost of $O(K^3)$ operations, we have a net computational cost of $O(MK^2 + K^3)$ for computing $\nabla \log\det(\I + \L)$.
      
    The gradient of the first term in \cref{eq:nonsymm-full-log-likelihood} involves computing gradients of determinants of size at most $K'$, which results in size $K'$ matrix inverses, since for a matrix $\A$, $\frac{\partial}{\partial A_{ij}}(\log \det (\A)) = (\A^{-1})^{\top}_{ij}$.  Each of these inverses can be computed in $O(K'^3)$ time. The gradient of the simple sum-of-norms regularization term can be computed in $O(MK)$ time. Therefore, combining these results with the results above for the complexity of the gradient of the normalization term, we have the following overall complexity of the gradient for the full log-likelihood: $O(MK^2 + K^3 + nK'^3)$.    
\end{proof}
% \todo{Mike: Could we define $\X = \B^\top(\I + \V \V^\top)^{-1}\B$ or something similar? This makes easier to write the gradient and explain the complexity.}

\subsection{Proof of \cref{lmm:bilinear-inverse}}

\bilinearinverse*

\begin{proof}
%We prove \cref{prop:greedyruntime} by induction on $|Y|$. For 
%$|Y|=0$, it is straightforward that $\Delta_i = {\v_i}^\top \v_i + \b_i^\top C 
%\b_i = \L_{ii}$ where $\v_i, \b_i$ are the $i$-th row in $\V, \B$ and $\L = \V 
%\V^\top + \B \C \B^\top$. Suppose that the statement holds for $|Y|=k^\prime$.
% We prove that given $S = \{a_1, \dots, a_k\}$ it holds that
% \begin{align}
% \B_S^\top ( \B_S \C \B_S^\top)^{-1} \B_S = \sum_{j=1}^{k} \p_j \q_j^\top
% \end{align}
% where $\p_1 = {\b_{a_1}}/{(\b_{a_1}^\top \C \b_{a_1})}$, $\q_1 = \b_{a_1}$ and 
% \begin{align}
% \p_{j+1} = \left( \b_{a_j} - \sum_{i=1}^j \p_i \q_i^\top \C \b_{a_j}\right) / 
% \Delta_{a_j}, 
% \qquad
% \q_{j+1} = \b_{a_j} - \sum_{i=1}^j \p_i \q_i^\top \C^\top \b_{a_j}.
% \end{align}

We prove by induction on $k$. When $k=1$, the result is trivial because
\begin{align}
\B_Y^\top ( \B_Y \C \B_Y^\top)^{-1} \B_Y = \b_{a_1}^\top ( \b_{a_1} \C 
\b_{a_1}^\top )^{-1} \b_{a_1} = \p_1^\top \q_1.
\end{align}
Now we assume that the statement holds for $k - 1$. Let $Y := \{a_1, \dots, 
a_{k-1}\}$ and $a := a_k$. From the inductive hypothesis, it holds 
\begin{align} \label{eq:inductivehypothesis}
\B_Y^\top (\B_Y \C \B_Y^\top)^{-1} \B_Y 
= \sum_{j=1}^{k-1} \p_j^\top \q_j.
\end{align}
Now we write 
\begin{align}
&\B_{Y \cup \{a\}}^\top \left(\B_{Y \cup \{a\}} \C \B_{Y \cup \{a\}}^\top \right)^{-1} \B_{Y \cup\{a\}} \\
&= 
\B_{Y \cup \{a\}}^\top\left(
\begin{pmatrix}
	\B_{Y} \\ \b_a
\end{pmatrix} \C
\begin{pmatrix}
	\B_{Y}^\top &\b_a^\top
\end{pmatrix}
\right)^{-1} \B_{Y \cup \{a\}} \\
&= 
\begin{pmatrix}
	\B_{Y}^\top & \b_a^\top
\end{pmatrix}
\begin{pmatrix}
	\B_{Y} \C \B_{Y}^\top & \B_{Y} \C \b_a^\top \\
	\b_a \C \B_{Y}^\top & \b_a \C \b_a^\top
\end{pmatrix}^{-1}
\begin{pmatrix}
	\B_{Y} \\ \b_a
\end{pmatrix}. \label{eq:bilinear}
\end{align}

To handle the inverse matrix we employ the Schur complement, which yields
\begin{align}
\begin{pmatrix}
	\X &\y \\ \z & w
\end{pmatrix}^{-1}
= 
\begin{pmatrix}
	\X^{-1} & \bm 0 \\ \bm 0 & 0
\end{pmatrix}
+
\frac{1}{w - \z X^{-1} \y}
\begin{pmatrix}
	\X^{-1} \y \z \X^{-1} & -\X^{-1} \y \\ 
    -\z \X^{-1} & 1
\end{pmatrix}
\end{align}
for any non-singular square matrix $\X\in \mathbb{R}^{k \times k}$, column vector $\y \in \mathbb{R}^{k}$ and row vector $\z\in \mathbb{R}^{1 \times k}$, unless $(w - \z X^{-1} \y)=0$. 
Applying this, we have
\begin{align}
&\begin{pmatrix}
	\B_{Y} \C \B_{Y}^\top & \B_{Y} \C \b_a^\top \nonumber\\
	\b_a \C \B_{Y}^\top & \b_a \C \b_a^\top
\end{pmatrix}^{-1} 
=
\begin{pmatrix}
	(\B_{Y} \C \B_{Y}^\top)^{-1} & \bm 0 \\
	\bm 0 & 0
\end{pmatrix}
+ 
\frac{1}{\b_a \C \b_a^\top - \b_a \C \B_{Y}^\top ( \B_{Y} \C \B_{Y}^\top)^{-1} \B_{Y} \C \b_a^\top} \\
&\qquad \qquad \qquad
\begin{pmatrix}
	(\B_{Y} \C \B_{Y}^\top)^{-1} \B_{Y} \C \b_a^\top \b_a \C \B_{Y}^\top (\B_{Y} \C \B_{Y}^\top)^{-1} & -(\B_{Y} \C \B_{Y}^\top)^{-1} \B_{Y} \C \b_a^\top \\
	-\b_a \C \B_{Y}^\top (\B_{Y} \C \B_{Y}^\top)^{-1} & 1.
\end{pmatrix} \label{eq:schur}
\end{align}

Substituting \cref{eq:schur} into \cref{eq:bilinear}, we obtain
\begin{align}
&\B_{Y \cup \{a\}}^\top \left(\B_{Y \cup \{a\}} \C \B_{Y \cup \{a\}}^\top 
\right)^{-1} \B_{Y \cup \{a\}} \\
&= 
\B_{Y}^\top \left(\B_{Y} \C \B_{Y}^\top \right)^{-1} \B_{Y}  
+ \frac{\left(\b_a^\top - \B_{Y}^\top (\B_{Y} \C \B_{Y}^\top)^{-1} \B_{Y} \C 
\b_a^\top \right) \left( \b_a - \b_a \C \B_{Y}^\top (\B_{Y} \C \B_{Y}^\top)^{-1} 
\B_{Y}\right)}{\b_a \C \left( \b_a^\top - \B_{Y}^\top (\B_{Y} \C \B_{Y}^\top)^{-1} 
\B_{Y} \C \b_a^\top \right)} \\
&=
\sum_{j=1}^{k-1} \p_j^\top \q_j 
+ 
\frac{\left(\b_a^\top - \sum_{j=1}^{k-1} \p_j^\top \q_j \C \b_a^\top\right) 
\left(\b_a - \b_a \C \sum_{j=1}^{k-1} \p_j^\top \q_j \right)}{\b_a \C 
\left( \b_a^\top - \sum_{j=1}^{k-1} \p_j^\top \q_j \C \b_a^\top \right)} \\
&=\sum_{j=1}^{k-1} \p_j^\top \q_j + \p_{k}^\top \q_{k}
\end{align}
where the third line holds from the inductive hypothesis 
\cref{eq:inductivehypothesis} and the last line holds from the definition 
of $\p_{k}, \q_{k} \in \mathbb{R}^{1 \times K}$. 
\end{proof}

\subsection{Proof of \cref{thm:greedy}}

\thmgreedy*

\begin{proof}
The proof of \cref{thm:greedy} relies on an approximation guarantee for 
nonsubmodular greedy maximization \cite[Theorem 1]{bian2017icml}. We 
introduce their result below.

\begin{theorem}[{\cite[Theorem 1]{bian2017icml}}] \label{thm:nonsubmodular}
Consider a set function $f$ defined on all subsets of $\{1, \dots, M\}=[\![M]\!]$ is monotone nondecreasing and nonnegative, i.e., $0 \leq f(Y) \leq f(X)$ for $\forall Y \subseteq X\subseteq [\![M]\!]$. Given a cardinality budget $k\ge 1$, let $Y^*$ be the optimal solution of $\max_{|Y|= k} f(Y)$ and $Y^0 = \emptyset$, $Y^t := \{a_1, \dots, a_t\}, t=1,\dots, k$ be the successive chosen by the greedy algorithm with budget $k$.  
Denote $\gamma$ be the largest scalar such that
\begin{align}
\sum_{i \in X \setminus Y^t} (f(Y^t \cup \{i\}) - f(Y^t)) \ge \gamma (f(X \cup Y^t) - f(Y^t)), 
\end{align}
for $\forall X\subseteq [\![M]\!], |X|=k$ and $t=0,\dots,k-1$, and $\alpha$ be the smallest scalar such that
\begin{align}
f(Y^{t-1}\cup \{i\}\cup X ) - f(Y^{t-1} \cup X) \geq \left(1-\alpha\right)(f(Y^{t-1}\cup \{i\}) - f(Y^{t-1})).
\end{align}
for $\forall X\subseteq [\![M]\!], |X|=k$ and $i \in Y^{k-1} \setminus X$.
Then, it holds that
\begin{align} \label{eq:greedynonsubmodularratio}
f(Y^k) \geq \frac{1}{\alpha} \left( 1 - e^{-\alpha \gamma}\right) f(Y^*).
\end{align}
\end{theorem}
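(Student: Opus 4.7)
The plan is to derive the per-step recursion
\begin{align*}
f(Y^t) - f(Y^{t-1}) \;\ge\; \tfrac{\gamma}{k}\bigl(f(Y^*) - \alpha\, f(Y^{t-1})\bigr),
\end{align*}
then unroll over $t=1,\dots,k$ starting from $f(\emptyset)=0$ (which we may assume WLOG by a constant shift, since both $\gamma$ and $\alpha$ depend only on marginal differences of $f$ and are invariant under such a shift). The recursion is equivalent to $f(Y^*) - \alpha f(Y^t) \le (1-\alpha\gamma/k)\bigl(f(Y^*) - \alpha f(Y^{t-1})\bigr)$, so $k$-fold iteration combined with the standard estimate $(1-\alpha\gamma/k)^k \le e^{-\alpha\gamma}$ yields $f(Y^*) - \alpha f(Y^k) \le e^{-\alpha\gamma}f(Y^*)$, i.e., the claimed bound $f(Y^k) \ge (1/\alpha)(1-e^{-\alpha\gamma})f(Y^*)$.

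To establish the recursion I chain two estimates. First, the greedy rule together with averaging over $Y^*\setminus Y^{t-1}$ and the submodularity-ratio definition applied at $X = Y^*$ give
\begin{align*}
f(Y^t) - f(Y^{t-1}) \;\ge\; \tfrac{1}{k}\!\!\sum_{i\in Y^*\setminus Y^{t-1}}\!\bigl[f(Y^{t-1}\cup\{i\}) - f(Y^{t-1})\bigr] \;\ge\; \tfrac{\gamma}{k}\,\bigl[f(Y^{t-1}\cup Y^*) - f(Y^{t-1})\bigr].
\end{align*}
Second, I use the generalized curvature to convert the right-hand side into $f(Y^*) - \alpha f(Y^{t-1})$. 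The idea is a telescope along the greedy order of $Y^{t-1}$: writing
$f(Y^{t-1}\cup Y^*) - f(Y^*) = \sum_{j=1}^{t-1}\bigl[f(Y^j\cup Y^*) - f(Y^{j-1}\cup Y^*)\bigr]$
and applying the curvature inequality to each increment with $i=a_j$, prefix $Y^{j-1}$, and $X = Y^*$ (which has $|X|=k$ and satisfies $a_j \in Y^{k-1}\setminus X$ whenever $a_j\notin Y^*$) yields the per-term bound $f(Y^j\cup Y^*) - f(Y^{j-1}\cup Y^*) \ge (1-\alpha)[f(Y^j) - f(Y^{j-1})]$. Summing and adding $f(Y^*)-f(Y^{t-1})$ on both sides gives $f(Y^{t-1}\cup Y^*) - f(Y^{t-1}) \ge f(Y^*) - \alpha f(Y^{t-1})$, which combined with the first estimate delivers the target recursion.

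The main obstacle is the bookkeeping in the curvature step. The theorem's curvature condition restricts $i$ to $Y^{k-1}\setminus X$, so in the telescope only the indices $j$ with $a_j\notin Y^*$ directly admit the $(1-\alpha)$-bound, whereas indices with $a_j\in Y^*$ contribute $0$ to the left-hand side of the increment but still count toward $f(Y^{t-1}) = \sum_{j=1}^{t-1}[f(Y^j)-f(Y^{j-1})]$ on the right-hand side. Matching the partial sum one obtains from curvature against the full $f(Y^{t-1})$ requires invoking monotonicity so that the ``skipped'' increments for $a_j\in Y^*$ are nonnegative and can be absorbed (for instance by dominating them via $f(Y^*\cup Y^{t-1}) \ge f(Y^*)$ or by refining the recursion to track only the non-optimal portion of the greedy progress and then reintroducing the coincident terms at the end). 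Once this telescope is done cleanly, the remaining argument is the standard first-order linear-recurrence calculation sketched in the opening paragraph.
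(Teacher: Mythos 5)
First, note that the paper does not actually prove this statement: it is quoted directly from \citet[Theorem 1]{bian2017icml} and used as a black box inside the proof of \cref{thm:greedy}, so there is no in-paper argument to compare against. Your attempt is therefore a reconstruction of the external proof, and its skeleton (greedy step plus submodularity ratio to lower-bound the gain, curvature telescope to relate $f(Y^{t-1}\cup Y^*)$ to $f(Y^*)$, then a geometric recursion) is indeed the right shape.

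However, the gap you flag at the end is real and your proposed repairs do not close it. Write $\rho_j = f(Y^j)-f(Y^{j-1})$, $A_t=\sum_{j\le t,\,a_j\notin Y^*}\rho_j$ and $B_t=\sum_{j\le t,\,a_j\in Y^*}\rho_j$. Your telescope with $X=Y^*$ only applies the curvature inequality to indices with $a_j\notin Y^*$ (the other increments vanish on the left-hand side), so what it actually yields is $f(Y^{t}\cup Y^*)-f(Y^{t})\ge f(Y^*)-\alpha A_t-B_t$, in which the coincident gains $B_t$ carry coefficient $1$ rather than $\alpha$. The inequality your recursion needs, $f(Y^{t}\cup Y^*)-f(Y^{t})\ge f(Y^*)-\alpha f(Y^{t})=f(Y^*)-\alpha(A_t+B_t)$, is stronger by exactly $(1-\alpha)B_t$ and is false in general: if $Y^{t}\subseteq Y^*$ with $f(Y^{t})>0$ and $\alpha<1$, the left side equals $f(Y^*)-f(Y^{t})$, which is strictly less than $f(Y^*)-\alpha f(Y^{t})$. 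Monotonicity cannot absorb the discrepancy---it gives $\rho_j\ge 0$, which makes the surplus term $-(1-\alpha)B_t$ hurt rather than help---and $f(Y^*\cup Y^{t})\ge f(Y^*)$ is already consumed in the telescope. If you instead keep the correct bound and run the geometric recursion on the potential $\Phi_t=f(Y^*)-\alpha A_t-B_t$ (which satisfies $\Phi_{t+1}\le(1-\alpha\gamma/k)\Phi_t$ since $\Phi_{t+1}\le\Phi_t-\alpha\rho_{t+1}$ and $\rho_{t+1}\ge\frac{\gamma}{k}\Phi_t$), you obtain $\alpha A_k+B_k\ge(1-e^{-\alpha\gamma})f(Y^*)$ and hence only the weaker guarantee $f(Y^k)\ge(1-e^{-\alpha\gamma})f(Y^*)$, because $\alpha A_k+B_k\le f(Y^k)$ but not $\le\alpha f(Y^k)$. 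Recovering the $1/\alpha$ prefactor requires the additional bookkeeping in \citet{bian2017icml} for greedy picks that land in $Y^*$; as written, your sketch does not handle that case. (Your normalization $f(\emptyset)=0$ is fine, since shifting down by $f(\emptyset)\ge 0$ only strengthens the final inequality given that the approximation constant is at most $1$.)
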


In order to apply this result for MAP inference of NDPPs, the objective should be 
monotone nondecreasing and nonnegative. We first show that $\sigma_{\min}>1$ is 
a sufficient condition for both monotonicity and nonnegativity.

\begin{restatable}{lemma}{lmmmonotonicity}
\label{lmm:monotonicity}
Given a $P_0$ matrix $\L \in \mathbb{R}^{M \times M}$ and the budget $k\geq0$, 
a set function $f(Y) = \log \det(\L_Y)$ defined on $Y \subseteq [\![M]\!]$
is monotone nondecreasing and nonnegative for $|Y| \leq k$ when $\sigma_{\min} > 1$.
\end{restatable}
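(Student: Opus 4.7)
The plan is to prove both monotonicity and nonnegativity in one stroke, by showing that every single-element extension strictly increases $\log\det(\L_Y)$, and then observing that $\log\det(\L_\emptyset) = 0$. The central quantity is the Schur-complement ratio $s := \det(\L_{Y\cup\{i\}})/\det(\L_Y)$ for $i\notin Y$; if we can show $s > 1$, everything follows by induction along a chain from $\emptyset$ up to $Y$.

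First I would note that, since $\L\in P_0$, all principal minors $\det(\L_Y)$ are nonnegative, and since $\sigma_{\min}(\L_Y) > 1$ for $|Y|\le 2k$, these submatrices are nonsingular, so in fact $\det(\L_Y) > 0$. This makes $s$ well defined and positive. Next I would write $s$ via the standard Schur complement identity: reorder $Y\cup\{i\}$ so that $i$ is the last index, then
\begin{align*}
s \;=\; \L_{ii} - \L_{iY}\L_Y^{-1}\L_{Yi} \;=\; \frac{1}{\bigl(\L_{Y\cup\{i\}}^{-1}\bigr)_{ii}},
\end{align*}
where the second equality follows from the block-inverse formula applied to $\L_{Y\cup\{i\}}$ (the bottom-right $1\times 1$ block of the inverse equals the reciprocal of the Schur complement).

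The key step is bounding $|(\L_{Y\cup\{i\}}^{-1})_{ii}|$. Using the elementary inequality $|e_i^\top A e_i| \le \|A\|_2$ for any square matrix $A$, together with $\|\L_{Y\cup\{i\}}^{-1}\|_2 = 1/\sigma_{\min}(\L_{Y\cup\{i\}})$, I get
\begin{align*}
\bigl|(\L_{Y\cup\{i\}}^{-1})_{ii}\bigr| \;\le\; \frac{1}{\sigma_{\min}(\L_{Y\cup\{i\}})} \;<\; 1,
\end{align*}
which uses precisely the assumption $\sigma_{\min} > 1$ on submatrices of size $|Y|+1 \le 2k$. Combining with $s > 0$ and $|s| = 1/|(\L_{Y\cup\{i\}}^{-1})_{ii}| > 1$ forces $s > 1$, hence $\log\det(\L_{Y\cup\{i\}}) > \log\det(\L_Y)$.

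Monotonicity for arbitrary pairs $Y\subseteq X$ with $|X|\le 2k$ then follows by applying this one-element-at-a-time argument along any chain from $Y$ to $X$ (each intermediate submatrix has size at most $2k$, so the singular-value hypothesis remains in force). Nonnegativity is immediate: $\log\det(\L_\emptyset) = \log 1 = 0$, and monotonicity gives $\log\det(\L_Y) \ge 0$. The only subtlety I anticipate is being careful that the Schur complement identity $s = 1/(M^{-1})_{ii}$ holds for nonsymmetric $M$ (it does, from the block inverse formula, without any symmetry assumption) and that the diagonal-entry bound $|a_{ii}|\le \|A\|_2$ applies equally well to nonsymmetric $A$; both are standard but worth stating explicitly, since the whole point of this paper is the nonsymmetric setting.
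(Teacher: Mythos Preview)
Your proof is correct and takes a genuinely different route from the paper. The paper's argument rests on the identity $\det(\L_Y) = \prod_j \sigma_j(\L_Y)$ (valid here since $\L\in P_0$ forces the determinant to equal its absolute value) together with Thompson's interlacing inequality for singular values of a principal submatrix, which yields $\sigma_j(\L_{Y\cup\{i\}}) \geq \sigma_j(\L_Y)$ and hence $f(Y\cup\{i\}) - f(Y) \ge \log\sigma_{|Y|+1}(\L_{Y\cup\{i\}}) \ge \log\sigma_{\min}$. Your Schur-complement approach avoids the external interlacing theorem entirely: the chain $s = 1/(\L_{Y\cup\{i\}}^{-1})_{ii}$ and $|(\L_{Y\cup\{i\}}^{-1})_{ii}| \le \|\L_{Y\cup\{i\}}^{-1}\|_2 = 1/\sigma_{\min}(\L_{Y\cup\{i\}})$ in fact recovers the \emph{same} quantitative lower bound $s \ge \sigma_{\min}$, so nothing is lost for this lemma or for the lower half of the paper's subsequent marginal-gain estimates. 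What the interlacing route additionally buys is the companion \emph{upper} bound $f(Y\cup\{i\}) - f(Y) \le 2\log\sigma_{\max} - \log\sigma_{\min}$, needed later for the curvature and submodularity-ratio analysis in \cref{thm:greedy}; that direction does not fall out of your inverse-diagonal argument, and the paper obtains it from the other half of Thompson's inequality.
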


The proof of \cref{lmm:monotonicity} is provided in \cref{sec:lmmmonotonicity}.
Next, we aim to find proper bounds on $\alpha$ and $\gamma$. To resolve this, 
we provide the following upper and lower bounds of the marginal gain for $f(Y) = \log \det(\L_Y)$. 
%We show that such marginal gains 
%are bounded by the bounds of singular values, i.e., $\sigma_{\max}$ and 
%$\sigma_{\min}$, in \cref{lmm:marginalgainbound}.
\begin{restatable}{lemma}{lmmmarginalgainbound}
\label{lmm:marginalgainbound}
Let $f(Y) = \log \det(\L_Y)$ and assume that $\sigma_{\min} > 1$. Then, for $Y \subseteq [\![M]\!], |Y| < 2k$ and $i \notin Y$, it holds that
\begin{align}
f(Y \cup \{i\})- f(Y) &\geq \log \sigma_{\min}, \label{eq:marginalgainlower} \\
f(Y \cup \{i\})- f(Y) &\leq 2 \log \sigma_{\max} - \log \sigma_{\min}
\label{eq:marginalgainupper}
\end{align}
where $\sigma_{\min}$ and $\sigma_{\max}$ are the smallest and largest singular values of $\L_Y$ for all $Y \subseteq [\![M]\!], |Y| \leq 2k$.
\end{restatable}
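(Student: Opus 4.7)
The plan is to express the marginal gain as a log-Schur-complement and bound it on both sides via singular-value interlacing for principal submatrices. By the Schur determinant identity,
\begin{align*}
f(Y \cup \{i\}) - f(Y) = \log s, \qquad s := \frac{\det(\L_{Y \cup \{i\}})}{\det(\L_Y)} = \L_{ii} - \L_{iY}\L_Y^{-1}\L_{Yi}.
\end{align*}
Since $\L \in \PM \subseteq P_0$ and the hypothesis $\sigma_{\min} > 1 > 0$ makes every $\L_{Y'}$ with $|Y'| \leq 2k$ invertible with positive determinant, $s > 0$ and $\log s$ is well defined.

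For the lower bound, the plan is to use the ``easy'' direction of singular-value interlacing for principal submatrices, namely $\sigma_j(\L_Y) \leq \sigma_j(\L_{Y \cup \{i\}})$ for $j = 1, \ldots, |Y|$. This follows from the min-max characterization of singular values applied to the deletion of one row and one column. Taking the product over $j$ yields
\begin{align*}
\det(\L_Y) \leq \prod_{j=1}^{|Y|} \sigma_j(\L_{Y \cup \{i\}}) = \frac{\det(\L_{Y \cup \{i\}})}{\sigma_{\min}(\L_{Y \cup \{i\}})},
\end{align*}
so $s \geq \sigma_{\min}(\L_{Y \cup \{i\}}) \geq \sigma_{\min}$, and hence $\log s \geq \log \sigma_{\min}$.

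For the upper bound, the plan is to use the ``hard'' direction of the same interlacing, $\sigma_j(\L_Y) \geq \sigma_{j+2}(\L_{Y \cup \{i\}})$ for $j = 1, \ldots, |Y|-1$ (obtained by compressing first a row and then a column, each step shifting the singular-value index by one), combined with the hypothesis $\sigma_{|Y|}(\L_Y) = \sigma_{\min}(\L_Y) \geq \sigma_{\min}$ for the final factor. Multiplying these inequalities gives
\begin{align*}
\det(\L_Y) \geq \sigma_{\min} \prod_{j=3}^{|Y|+1} \sigma_j(\L_{Y \cup \{i\}}) = \frac{\sigma_{\min}\,\det(\L_{Y \cup \{i\}})}{\sigma_1(\L_{Y \cup \{i\}})\,\sigma_2(\L_{Y \cup \{i\}})},
\end{align*}
and rearranging produces $s \leq \sigma_1(\L_{Y \cup \{i\}})\sigma_2(\L_{Y \cup \{i\}})/\sigma_{\min} \leq \sigma_{\max}^2/\sigma_{\min}$, whence $\log s \leq 2\log \sigma_{\max} - \log \sigma_{\min}$. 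The degenerate case $|Y| = 0$ (empty products) is verified directly from $s = \L_{ii} \in [\sigma_{\min}, \sigma_{\max}]$, which gives both bounds at once.

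The main obstacle is realizing that Cauchy interlacing cannot be invoked here since $\L$ is nonsymmetric; one must instead use the weaker singular-value interlacing with a $+2$ index shift for principal submatrices. This shift is precisely the reason two factors of $\sigma_{\max}$ appear in the upper bound---the two largest singular values of $\L_{Y \cup \{i\}}$ must be ``sacrificed'' when lower-bounding $\det(\L_Y)$---whereas a Hermitian argument would only lose one factor and yield the sharper $s \leq \sigma_{\max}$.
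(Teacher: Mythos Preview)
Your proof is correct and takes essentially the same approach as the paper: both arguments rest on Thompson's singular-value interlacing for principal submatrices (the $+2$ shift version) together with the fact that, for a $P_0$ matrix, $\det(\L_Y)=\prod_j \sigma_j(\L_Y)$. The initial Schur-complement framing is a cosmetic detour---you never actually use the expression $\L_{ii}-\L_{iY}\L_Y^{-1}\L_{Yi}$---and thereafter your bounds on $s=\det(\L_{Y\cup\{i\}})/\det(\L_Y)$ coincide line-for-line with the paper's; your closing remark explaining why nonsymmetry forces the $2\log\sigma_{\max}$ term is a nice addition.
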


The proof of \cref{lmm:marginalgainbound} is provided in \cref{sec:lmmmarginalbound}.
To bound $\gamma$, we consider $X\subseteq [\![M]\!], |X|=k$ and 
denote $X \setminus Y^t = \{ x_1, \dots, x_r\} \neq \emptyset$. Then,
\begin{align}
\sum_{i \in X \setminus Y^t} (f(Y^t \cup \{i\}) - f(Y))
=\sum_{j=1}^r f(Y^t \cup \{x_r\}) - f(Y^t) 
\geq r \log \sigma_{\min} \label{eq:gammadenom}
\end{align}
where the last inequality comes from \cref{eq:marginalgainlower}. Similarly, we get
\begin{align}
f(X\cup Y^t) - f(Y^t) 
&= \sum_{j=1}^r f(\{x_1, \dots, x_{j}\} \cup Y^t) - f(\{x_1, \dots, x_{j-1}\} \cup Y^t)\\
&\leq r ( 2 \log \sigma_{\max} - \log \sigma_{\min} )
\label{eq:gammanumer}
\end{align}
where the last inequality comes from \cref{eq:marginalgainupper}. Combining 
\cref{eq:gammadenom} to \cref{eq:gammanumer}, we obtain that
\begin{align}
\frac{\sum_{i \in X \setminus Y^t} f(Y^t \cup \{i\}) - f(Y^t) }{f(X\cup Y^t) - f(Y^t)}
\geq \frac{\log \sigma_{\min}}{2 \log \sigma_{\max} - \log \sigma_{\min}}
\end{align}
which allows us to choose $\gamma = \left(2 \frac{\log \sigma_{\max}}{\log \sigma_{\min}} - 1\right)^{-1}$.

To bound $\alpha$, we similarly use \cref{lmm:marginalgainbound} to obtain
\begin{align}
\frac{f(X \cup Y^{t-1} \cup \{i\}) - f(X \cup Y^{t-1})}{f(Y^{t-1} \cup \{i\}) - f(Y^{t-1})}
\geq 
\frac{\log \sigma_{\min}}{2 \log \sigma_{\max} - \log \sigma_{\min}}
\end{align}
and we can choose $\alpha = 1 - \frac{\log \sigma_{\min}}{2 \log \sigma_{\max} - \log \sigma_{\min}} = \frac{2(\log \sigma_{\max} - \log \sigma_{\min})}{2 \log \sigma_{\max} - \log \sigma_{\min}}$. 

Now let $\kappa = \frac{\log \sigma_{\max}}{\log \sigma_{\min}}$.  Then 
$\gamma = \frac{1}{2 \kappa - 1}$ and $\alpha = \frac{2(\kappa - 1)}{2 \kappa - 
1}$. Putting $\gamma$ and $\alpha$ into \cref{eq:greedynonsubmodularratio}, we have

\begin{align}
\frac{1}{\alpha}(1 - e^{-\alpha \gamma}) 
&\geq \frac{2 \kappa - 1}{2(\kappa - 1)} \left(1 - e^{-\frac{2(\kappa - 
1)}{(2\kappa - 1)^2}}\right) \\
&\geq \frac{2 \kappa - 1}{2(\kappa - 1)} \ 4 \exp(-1/4)
\frac{2(\kappa - 1)}{(2\kappa - 1)^2} \\
&= \frac{4 \exp(-1/4)}{2 \kappa - 1}
\end{align}
where the second inequality holds from the fact that $\max_{\kappa \geq 1} 
\frac{2(\kappa-1)}{(2\kappa-1)^2} = \frac{1}{4}$ and $1 - e^{-x} \geq 4 
\exp(-1/4) x$ for $x \in [0, 1/4]$.
\end{proof}

\subsection{Proof of \cref{cor:greedy}} \label{sec:proof-corr-greedy}

\corrgreedy*
% \begin{restatable}{corr}{}

\begin{proof}
Now consider $\L' = (\frac{e}{\sigma_{\min}}) \L $ where $e$ 
is the exponential constant. Then, $\sigma_{\min}^\prime = \sigma_{\min} 
(\frac{e}{\sigma_{\min}}) = e$ and $\sigma^\prime_{\max} = 
\sigma_{\max}(\frac{e}{\sigma_{\min}})$. Using the fact that 
$\log\det(\L_Y^\prime)=\log\det(\L_Y) - |Y| \log \sigma_{\min}$, we obtain the 
result.
\end{proof}

\subsection{Proof of \cref{lmm:monotonicity}} \label{sec:lmmmonotonicity}

Before stating the proof, we introduce interlacing properties of singular values.
\begin{theorem} [Interlacing Inequality for Singular Values, 
{\cite[Theorem 1]{thompson1972principal}}]
Consider a real matrix $\A \in \mathbb{R}^{M \times N}$ with singular values  
$\sigma_1 \geq \sigma_2 \geq \dots \geq \sigma_{\min(M,N)}$ and its supmatrix 
$\B \in \mathbb{R}^{P \times Q}$ with singular values $\beta_1 \geq \beta_2
\geq \dots \geq \beta_{\min(P,Q)}$. Then, the singular values have the 
following interlacing properties:
\begin{align}
\sigma_i &\geq \beta_i, &&\text{for}\ \ 
i=1,\dots,\min(P,Q),\label{eq:interlaceupper}\\
\beta_i &\geq \sigma_{i+M-P+N-Q}, &&\text{for} \ \ i = 1, \dots, 
\min(P+Q-M, P+Q-N).\label{eq:interlacelower}
\end{align}
Note that when $M=N$ and $P=Q=N-1$, it holds that $\beta_i \geq \sigma_{i+2}$ for 
$i=1, \dots, N-2$.
\end{theorem}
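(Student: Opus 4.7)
The plan is to reduce both inequalities to Cauchy's interlacing theorem for eigenvalues of Hermitian matrices, via the Jordan--Wielandt (bipartite) symmetrization. First I would form the symmetric $(M+N)\times(M+N)$ matrix
\[
J(\A) := \begin{pmatrix} \mathbf{0} & \A \\ \A^\top & \mathbf{0} \end{pmatrix},
\]
and, using an SVD $\A = U\Sigma V^\top$, verify directly that $(u_i^\top, \pm v_i^\top)^\top/\sqrt{2}$ is an eigenvector of $J(\A)$ with eigenvalue $\pm\sigma_i(\A)$ for each $i \le \min(M,N)$, while the remaining $|M-N|$ eigenvalues---coming from the left or right null space of $\A$---equal zero. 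Consequently, in nonincreasing order, $\lambda_i(J(\A)) = \sigma_i(\A)$ for every $i \le \min(M,N)$, and then the spectrum contains a block of zeros followed by the mirrored negative singular values.

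Next I would observe that if $\B = \A_{\mathcal{I},\mathcal{J}}$ with $|\mathcal{I}|=P$ and $|\mathcal{J}|=Q$, then $J(\B)$ is exactly the principal submatrix of $J(\A)$ indexed by $\mathcal{I}\cup(M+\mathcal{J})$; that is, $J(\B)$ is obtained from $J(\A)$ by deleting $d:=(M-P)+(N-Q)$ rows and the corresponding columns. Cauchy's interlacing theorem for Hermitian matrices then yields
\[
\lambda_i(J(\A)) \;\ge\; \lambda_i(J(\B)) \;\ge\; \lambda_{i+d}(J(\A)), \qquad 1 \le i \le (M+N)-d.
\]

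The last step is to translate these eigenvalue inequalities back into singular-value inequalities. For the upper bound, if $i \le \min(P,Q)$ (which forces $i \le \min(M,N)$), then $\lambda_i(J(\A))=\sigma_i$ and $\lambda_i(J(\B))=\beta_i$, so $\sigma_i\ge\beta_i$, i.e., \cref{eq:interlaceupper}. For the lower bound, the identification $\lambda_{i+d}(J(\A))=\sigma_{i+d}$ only holds when $i+d \le \min(M,N)$, which simplifies to $i \le \min(P+Q-M,\, P+Q-N)$; in that range we obtain $\beta_i \ge \sigma_{i+M-P+N-Q}$, matching \cref{eq:interlacelower}.

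The hard part will be managing the index bookkeeping in this last step. For non-square $\A$, the spectrum of $J(\A)$ contains a block of $|M-N|$ zero eigenvalues sandwiched between the positive singular values and their negatives, so the argument must ensure that each index of interest lands strictly within the positive part of the spectrum; this is precisely what the cardinality constraints in \cref{eq:interlaceupper,eq:interlacelower} are designed to enforce. The remaining verifications---that the Jordan--Wielandt eigenpairs arise cleanly from the SVD, and that $J(\B)$ sits inside $J(\A)$ as a genuine principal submatrix rather than an arbitrary submatrix---are routine linear algebra.
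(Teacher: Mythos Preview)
The paper does not give its own proof of this theorem at all: it is stated as a quoted external result from \cite{thompson1972principal} and then used as a black box in the proofs of \cref{lmm:monotonicity} and \cref{lmm:marginalgainbound}. So there is no ``paper's proof'' to compare against.

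That said, your approach is correct and is essentially the standard proof of this fact (and, in outline, matches Thompson's original argument). The Jordan--Wielandt augmentation $J(\A)$ has spectrum $\{\pm\sigma_i\}\cup\{0\}^{|M-N|}$, the submatrix $\B$ induces the principal submatrix $J(\B)$ of $J(\A)$ obtained by deleting $d=(M-P)+(N-Q)$ symmetric rows/columns, and Cauchy interlacing gives $\lambda_i(J(\A))\ge\lambda_i(J(\B))\ge\lambda_{i+d}(J(\A))$. Your index bookkeeping is right: for $i\le\min(P,Q)$ the top eigenvalues of both $J(\A)$ and $J(\B)$ equal the corresponding singular values, yielding \cref{eq:interlaceupper}; and the constraint $i+d\le\min(M,N)$, which keeps $\lambda_{i+d}(J(\A))$ inside the positive block of the spectrum, rewrites exactly as $i\le\min(P+Q-M,P+Q-N)$, giving \cref{eq:interlacelower}. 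The only thing worth stating explicitly in a full write-up is that $\min(P+Q-M,P+Q-N)\le\min(P,Q)$ (which follows from $P\le M$, $Q\le N$), so that $\lambda_i(J(\B))=\beta_i$ is indeed valid in the lower-bound range.
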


We are now ready to prove \cref{lmm:monotonicity}.

\lmmmonotonicity*

\begin{proof}
Since $\L \in P_0$, all of its principal submatrices are also in
$P_0$. By the definition of a $P_0$ matrix, it holds that 
\begin{align}
\abs{\det(\L_Y)}= \det(\L_Y) = \prod_i \sigma_i(\L_Y)
\end{align}
where $\sigma_i(\L_Y)$ 
for $i \in [\![|Y|]\!]$ are singular values of $\L_Y$. Then, $F(Y) = \sum_i 
\log(\sigma_i(\L_Y))$ is nonnegative for all $Y$ such that $|Y| \leq K$ due to 
$\sigma_i(\L_Y) \geq \sigma_{\min} > 1$. Similarly, we have $F(Y \cup \{a\}) - 
F(Y) = \sum_{i=1}^{|Y|+1} \log \sigma_i(\L_{Y \cup \{a\}}) - \sum_{i=1}^{|Y|} 
\log \sigma_i(\L_Y) \geq \log \sigma_{\min}> 0$ from \cref{eq:interlaceupper}.
\end{proof}

\subsection{Proof of \cref{lmm:marginalgainbound}} \label{sec:lmmmarginalbound}

\lmmmarginalgainbound*

\begin{proof}
For a $P_0$ matrix, we remark that its determinant is equivalent to the product 
of all singular values. For $Y \subseteq [\![M]\!]$ and $i \notin Y$, from the 
interlacing inequality of \cref{eq:interlaceupper} we have that
\begin{align}
F(Y \cup \{i\})- F(Y) 
&= \sum_{j=1}^{|Y|+1} \log \sigma^\prime_j - \sum_{j=1}^{|Y|} \log \sigma_j 
\geq \log \sigma^\prime_{|Y|+1} 
\geq \log \sigma_{\min}
\end{align}
where $\sigma^\prime_j$ and $\sigma_j$ are the $j$-th largest singular value of 
$\L_{Y \cup \{i\}}$ and $\L_Y$, respectively. Similarly, using 
\cref{eq:interlacelower}, we get
\begin{align} 
F(Y \cup \{i\})- F(Y) \leq \log( \sigma^\prime_1 \sigma^\prime_2 ) - \log 
\sigma_{|Y|}\leq 2 \log \sigma_{\max} - \log \sigma_{\min}.
\end{align}
\end{proof}

% \bibliographystyle{plainnat}
% \bibliography{bibliography}

\end{document}